\newtheorem{theorem}{Theorem}
\newtheorem{lemma}[theorem]{Lemma}
\newtheorem{corollary}{Corollary}
\theoremstyle{definition}
\newtheorem{definition}{Definition}
\theoremstyle{remark}
\newtheorem{remark}{Remark}
\newcommand{\bz}{\ensuremath{\bm{z}}\xspace}
\newcommand{\emp}{\ensuremath{\ecal_{\bz}}\xspace}
\newcommand{\rbb}{\ensuremath{\mathbb{R}}\xspace}
\newcommand{\nbb}{\ensuremath{\mathbb{N}}\xspace}
\newcommand{\fcal}{\ensuremath{\mathcal{F}}\xspace}
\newcommand{\tfcal}{\ensuremath{\widetilde{\mathcal{F}}}\xspace}
\newcommand{\ncal}{\ensuremath{\mathcal{N}}\xspace}
\newcommand{\xcal}{\ensuremath{\mathcal{X}}\xspace}
\newcommand{\ycal}{\ensuremath{\mathcal{Y}}\xspace}
\newcommand{\zcal}{\ensuremath{\mathcal{Z}}\xspace}
\newcommand{\gcal}{\ensuremath{\mathcal{G}}\xspace}
\newcommand{\hcal}{\ensuremath{\mathcal{H}}\xspace}
\newcommand{\ecal}{\ensuremath{\mathcal{E}}\xspace}
\newcommand{\ocal}{\ensuremath{\mathcal{O}}\xspace}
\newcommand{\ebb}{\ensuremath{\mathbb{E}}\xspace}
\newcommand{\lrceil}[1]{\ensuremath{\left\lceil#1\right\rceil}\xspace}
\newcommand{\argmin}{\ensuremath{\mathop{\mathrm{argmin}}}\xspace}%
\newcommand{\dif}{\ensuremath{\mathrm{d}}\xspace}%
\newcommand{\lrbrack}[1]{\ensuremath{\left[#1\right]}\xspace}
\newcommand{\lrgroup}[1]{\ensuremath{\left(#1\right)}\xspace}
\newcommand{\abs}[1]{\ensuremath{\left|#1\right|}\xspace}
\newcommand{\pr}[1]{\ensuremath{\text{Pr}\big\{#1\big\}}\xspace}
\newcommand{\lto}{\ensuremath{\longrightarrow}\xspace}
\numberwithin{equation}{section}
\begin{document}

\title{Local Rademacher Complexity Bounds based on Covering Numbers}
\author[1]{Yunwen Lei\thanks{yunwen.lei@hotmail.com. Part of the work was done at Wuhan University}}
\author[2]{Lixin Ding\thanks{lxding@whu.edu.cn}}
\author[3]{Yingzhou Bi\thanks{byzhou@163.com}}

\affil[1]{Department of Mathematics, City University of Hong Kong}
\affil[2]{State Key Lab of Software Engineering, School of Computer, Wuhan University}
\affil[3]{Science Computing and Intelligent Information Processing of GuangXi Higher Education Key Laboratory, Guangxi Teachers Education University}

\date{}
\maketitle

\begin{abstract}
This paper provides a general result on controlling local Rademacher complexities, which captures in an elegant form to relate the complexities with constraint on the expected norm to the corresponding ones with constraint on the empirical norm. This result is convenient to apply in real applications and could yield refined local Rademacher complexity bounds for function classes satisfying general entropy conditions. We demonstrate the power of our complexity bounds by applying them to derive effective generalization error bounds.

\smallskip
\noindent\textsc{Keywords}.\;\; Local Rademacher complexity; Covering numbers; Learning theory
\end{abstract}


\section{Introduction}
Machine learning refers to a process of inferring the underlying relationship among input-output variables from a previously chosen hypothesis class \hcal, on the basis of some scattered, noisy examples~\citep{hastie2001elements,vapnik2000nature}. Generalization analysis on learning algorithms stands a central place in machine learning since it is important to understand the factors influencing models' behavior, as well as to suggest ways to improve them~\citep{bousquet2003new,bartlett2005local,cortes2013learning,chen2014extreme,blanchard2008statistical,lei2015multi}. One seminar example can be found in the \emph{multiple kernel learning} (MKL) context, where \citet{cortes2013learning} established a framework showing how the generalization analysis in \citep{kloft2011local,mendelson2003performance,kloft2012convergence} could motivate two novel MKL algorithms.


\citet{vapnik1971uniform} pioneered the research on learning theory by relating generalization errors to the supremum of an empirical process: $\sup_{f\in\fcal}[Pf-P_nf]$, where \fcal is the associated loss class induced from the hypothesis space, $P$ and $P_n$ are the true probability measure and the empirical probability measure, respectively. It was then indicated that this supremum is closely connected with the ``size'' of the space \fcal~\citep{vapnik2000nature,vapnik1971uniform}. For a finite class of functions, its size can be simply measured by its cardinality. ~\citet{vapnik2000nature} provided a novel concept called VC dimension to characterize the complexity of $\{0,1\}$-valued function classes, by noticing that the quantity of significance is the number of points acquired when projecting the function class onto the sample. Other quantities like covering numbers, which measure the number of balls required to cover the original class, have been introduced to capture, on a finer scale, the ``size'' of real-valued function classes~\citep{zhou2002covering,cucker2002mathematical,zhou2003capacity,kolmogorov1959varepsilon}. With the recent development in concentration inequalities and empirical process theory, it is possible to obtain a slightly tighter estimate on the ``size'' of \hcal through the remarkable concept called Rademacher complexity~\citep{bartlett2002rademacher,bartlett2005local,ying2010rademacher,koltchinskii2001rademacher}.

However, all the above mentioned approaches provide only global estimates on the complexity of function classes, and they do not reflect how a learning algorithm explores the function class and interacts with the examples~\citep{bousquet2003new,bousquet2002concentration}. Moreover, they are bound to control the deviation of empirical errors from the true errors simultaneously over the whole class, while the quantity of primary importance is only that deviation for the particular function picked by the learning algorithm, which may be far from reaching this supremum~\citep{bartlett2005local,koltchinskii2000rademacher,oneto2015local}. Therefore, the analysis based on a global complexity would give a rather conservative estimate. On the other hand, most learning algorithms are inclined towards choosing functions possessing small empirical errors and hopefully also small generalization errors~\citep{bousquet2003new}. Furthermore, if there holds a relationship between variances and expectations like $\text{Var}(f)\leq B(Pf)^\alpha$, these functions will also admit small variances. That is to say, the obtained prediction rule is likely to fall into a subclass with small variances~\citep{bartlett2005local}. Due to the seminar work of~\citet{koltchinskii2000rademacher} and~\citet{massart2000some}, it turns out that the notion of Rademacher complexity can be naturally modified to take this into account, yielding the so-called local Rademacher complexity~\citep{koltchinskii2000rademacher}. Since local Rademacher complexity is always smaller than the global counterpart, the discussion based on local Rademacher complexities always yields significantly better learning rates under the variance-expectation conditions.


\citet{mendelson2003few,mendelson2002improving} initiated the discussion of estimating local Rademacher complexities with covering numbers and these complexity bounds are very effective in establishing \emph{fast} learning rates. However, the discussions in \citep{mendelson2003few,mendelson2002improving} are somewhat dispersed in the sense that the author did not provided a general result applicable to all function classes. Indeed, \citet{mendelson2003few,mendelson2002improving} derived local Rademacher complexity bounds for several function classes satisfying different entropy conditions case-by-case, and the involved deduction also relies on the specific entropy conditions. \citet{mendelson2003performance} also derived, for a general \emph{Reproducing Kernel Hilbert Space} (RKHS), an interesting local Rademacher complexity bound based on the eigenvalues of the associated integral operator, which was later generalized to $\ell_p$-norm MKL context~\citep{kloft2011local,kloft2012convergence,lv2015optimal}. These results are exclusively developed for RKHSs and it still remains unknown whether they could be extended to general function classes. In this paper, we try to refine these discussions by providing some general and sharp results on controlling local Rademacher complexities by covering numbers. A distinguished property of our result is that it captures in an elegant form to relate local Rademacher complexities to the associated \emph{empirical} local Rademacher complexities, which allows us to improve the existing local Rademacher complexity bounds for function classes with different entropy conditions in a systematic manner.  We also demonstrate the effectiveness of these complexity bounds by applying them to refine the existing learning rates.

The paper is organized as follows. Section~\ref{sec:statement} formulates the problem. Section~\ref{sec:local-rademacher complexity} provides a general local Rademacher complexity bound as well as its applications to different function classes.  Section~\ref{sec:application} applies our complexity bounds to generalization analysis. All proofs are presented in Section \ref{sec:proof}. Some conclusions are presented in Section~\ref{sec:conclusion}.

\section{Statement of the problem\label{sec:statement}}
We first introduce some notations which will be used throughout this paper. For a measure $\mu$ and a positive number $1\leq q<\infty$, the notation $L_q(\mu)$ means the collection of functions for which the norm $\|f\|_{L_q(\mu)}:=(\int|f|^q\dif\mu)^{1/q}$ is finite. For a class \fcal of functions, we use the abbreviation $a\fcal:=\{af:f\in\fcal\}$, and denote by
\begin{equation}\label{minus-class}
  \tfcal:=\{f-g:f,g\in\fcal\}
\end{equation}
the class consisting of those elements which can be represented as the minus of two elements in \fcal.
For a real number $a$, $\lrceil{a}$ indicates the least integer not less than $a$, and $\log a$ represents the natural logarithm of $a$. By $c(\cdot)$ we denote any quantity of a constant multiple of the involved arguments and its exact value may change from line to line, or even within the same line.

\begin{definition}[Empirical measure]
  Let $S$ be a set and let $s_1,s_2,\ldots,s_n$ be $n$ points in $S$, then the empirical measure $P_n$ supported on $s_1,s_2,\ldots,s_n$ is defined as
  \begin{equation}\label{empirical-measure}
    P_n(A):=\frac{1}{n}\sum_{i=1}^n\chi_A(s_i),\qquad \text{for any }A\subset S,
  \end{equation}
  where $\chi_I$ is the characteristic function defined by $\chi_A(s)=0$ if $s\not\in A$ and $\chi_A(s)=1$ if $s\in A$.
\end{definition}

If $Q$ is a measure and $f$ is a measurable function, it is convenient~\citep{bousquet2003new} to use the notation $Qf=\int f\dif Q=\ebb f$. Now, for the empirical measure $P_n$ supported on $Z_1,\ldots,Z_n$, the empirical average of $f$ can be abbreviated as $P_nf=\frac{1}{n}\sum_{i=1}^nf(Z_i)$.
\begin{definition}[Covering number~\citep{kolmogorov1959varepsilon}]\label{def:covering-number}
  Let $(\gcal,d)$ be a metric space and set $\fcal\subseteq \gcal$. For any $\epsilon>0$, a set $\fcal^\triangle$ is called an $\epsilon$-cover of \fcal if for every $f\in\fcal$ we can find an element $g\in\fcal^\triangle$ satisfying $d(f,g)\leq\epsilon$. An $\epsilon$-cover $\fcal^\triangle$ is called a proper $\epsilon$-cover if $\fcal^\triangle\subseteq\fcal$. The covering number $\ncal(\epsilon,\fcal,d)$ is the cardinality of a minimal proper $\epsilon$-cover of \fcal, that is
  $$\ncal(\epsilon,\fcal,d):=\min\{|\fcal^\triangle|:\fcal^\triangle\subseteq\fcal \text{ is an $\epsilon$-cover of \fcal}\}.$$We also define the logarithm of covering number as the entropy number.
\end{definition}

For brevity, when \gcal is a normed space with norm $\|\cdot\|$, we also denote by $\ncal(\epsilon, \fcal, \|\cdot\|)$ the covering number of \fcal with respect to the metric $d(f,g):=\|f-g\|$. Introduce the notation:
\begin{equation}\label{metric-capacity}
  \ncal(\epsilon,\fcal,\|\cdot\|_p):=\sup\nolimits_n\sup\nolimits_{P_n}\ncal(\epsilon,\fcal,\|\cdot\|_{L_p(P_n)}).
\end{equation}
%

\begin{definition}[Rademacher complexity~\citep{bartlett2002rademacher}]
  Let $P$ be a probability measure on \xcal from which the examples $X_1,\ldots,X_n$ are independently drawn. Let $\sigma_1,\ldots,\sigma_n$ be independent Rademacher random variables that have equal probability of being $1$ or $-1$. For a class \fcal of functions $f:\xcal\to\rbb$, introduce the notations:\[R_nf=\frac{1}{n}\sum_{i=1}^n\sigma_if(X_i),\qquad R_n\fcal=\sup_{f\in\fcal}R_nf.\]The Rademacher complexity $\ebb R_n\fcal$ and empirical Rademacher complexity $\ebb_\sigma R_n\fcal$ are defined by\[\ebb R_n\fcal:=\ebb\lrbrack{\sup_{f\in\fcal}\frac{1}{n}\sum_{i=1}^n\sigma_if(X_i)},\qquad\ebb_\sigma R_n\fcal:=\ebb\left[\sup_{f\in\fcal}\frac{1}{n}\sum_{i=1}^n\sigma_if(X_i)\bigg|X_1,\ldots,X_n\right].\]
\end{definition}

In this paper we concentrate our attention on local Rademacher complexities. The word local means that the class over which the Rademacher process is defined is a subset of the original class. We consider here local Rademacher complexities of the following form:
\[\ebb R_n\{f\in\fcal:Pf^2\leq r\}\qquad\text{or}\qquad\ebb_\sigma R_n\{f\in\fcal:P_nf^2\leq r\}.\]We refer to the former as the local Rademacher complexity and the latter as the \emph{empirical} local Rademacher complexity. The parameter $r$ is used to filter out those functions with large variances~\citep{mendelson2003performance}, which are of little significance in the learning process since learning algorithms are unlikely to pick them.

\section{Estimating local Rademacher complexities}\label{sec:local-rademacher complexity}
This section is devoted to establishing a general local Rademacher complexity bound. For this purpose, we first show how to control \emph{empirical} local Rademacher complexities. The empirical radii are then connected with the true radii via the contraction property of Rademacher averages~(Lemma~\ref{lem:contraction inequality}). Some examples illustrating the power of our result are also presented.

\subsection{Local Rademacher complexity bounds}
\citet{mendelson2002improving,mendelson2003few} studied $\ebb R_n\{f\in\fcal:Pf^2\leq r\}$ by relating it with
\begin{equation}\label{empirical-radius}
  \ebb R_n\{f\in\fcal:P_nf^2\leq\hat{r}\},\quad \hat{r}:=\sup_{f\in\fcal:Pf^2\leq r}P_nf^2,
\end{equation}
the latter of which involves an empirical radius defined w.r.t. the empirical measure $P_n$ and can be further tackled by standard entropy integral \citep{dudley1967sizes}, yielding a bound of the following form:
\begin{equation}\label{mendelson-lrc}
  \ebb R_n\{f\in\fcal:Pf^2\leq r\}\leq c\cdot \ebb\int_0^{\hat{r}}\log^{\frac{1}{2}}\ncal(\epsilon,\fcal,\|\cdot\|_{L_2(P_n)})d\epsilon.
\end{equation}
Although the expectation $\ebb\sqrt{\hat{r}}$ can be controlled by $r$ plus the local Rademacher complexity itself~\citep{ledoux1991probability}
\begin{equation}\label{empirical-radius-bound}
  \ebb\sqrt{\hat{r}}\leq r+4\sup_{f\in\fcal}\|f\|_\infty\ebb R_n\{f\in\fcal:Pf^2\leq r\},
\end{equation}
it is generally not trivial to control the integral in Eq. \eqref{mendelson-lrc} since the random variable $\hat{r}$ appears in the upper limit of the integral (the bound Eq. \eqref{empirical-radius-bound} can not be trivially used to control the r.h.s. of Eq. \eqref{mendelson-lrc}). Mendelson's \citep{mendelson2003few,mendelson2002improving} idea is, under different entropy conditions, to construct different upper bounds on the involved integral for which the random variable $\hat{r}$ appears in a relatively simple term. For example, for the function class \fcal satisfying $\log\ncal(\epsilon,\fcal,\|\cdot\|_2)\leq \log^p\frac{\gamma}{\epsilon}$, \citet{mendelson2003few} established the following bound on the integral:
\begin{equation}\label{mendelson-example-1}
  \ebb\int_0^{\hat{r}}\log^{\frac{1}{2}}\ncal(\epsilon,\fcal,\|\cdot\|_{L_2(P_n)})d\epsilon\leq \ebb\int_0^{\sqrt{\hat{r}}}\log^{\frac{p}{2}}\frac{\gamma}{\epsilon}d\epsilon\leq 2\ebb\Big[\sqrt{\hat{r}}\log^{\frac{p}{2}}\frac{c(p,\gamma)}{\sqrt{\hat{r}}}\Big].
\end{equation}
The term $\sqrt{\hat{r}}\log^{\frac{p}{2}}\frac{c(p,\gamma)}{\sqrt{\hat{r}}}$ turns out to be concave w.r.t. $\sqrt{\hat{r}}$, which, together with Jensen's inequality, can be controlled by applying the standard upper bound \eqref{empirical-radius-bound}. Although these deductions are elegant, they do not allow for general bounds for local Rademacher complexities, and sometimes yield unsatisfactory results due to the looseness introduced by constructing an additional artificial upper bound for the integral in Eq. \eqref{mendelson-lrc} (e.g., Eq. \eqref{mendelson-example-1}).

We overcome these drawbacks by providing a general result on controlling local Rademacher complexity bounds. The step stone is the following lemma controlling local Rademacher complexity on a sub-class involving a random radius $\hat{r}$ by a local Rademacher complexity on a sub-class involving a deterministic and adjustable parameter $\epsilon$ plus a linear function of $\sqrt{\hat{r}}$, which allows for a direct use of the standard upper bound on $\ebb\sqrt{\hat{r}}$ and excludes the necessity of constructing non-trivial bounds for the integral in Eq. \eqref{mendelson-lrc}. Our basic strategy, analogous to \citep{lei2014refined,srebro2010optimistic,lei2015generalization},  is to approximate the original function class \fcal with an $\epsilon$-cover, thus relating the local Rademacher complexity of \fcal to that of two related function classes. One class is of finite cardinality and can be approached by the Massart lemma (Lemma~\ref{lem:massart}), while the other is of small magnitude and is defined by empirical radii.


\begin{lemma}\label{lem:empirical-local-rademacher-L2}
  Let \fcal be a function class and let $P_n$ be the empirical measure supported on the points $X_1,\ldots,X_n$, then we have the following complexity bound ($r$ can be stochastic w.r.t. $X_i$, a typical choice of $r$ is the term $\hat{r}$ defined in Eq. \eqref{empirical-radius}):
  \[\ebb_\sigma R_n\{f\in\fcal:P_nf^2\leq r\}\leq\inf_{\epsilon>0}\lrbrack{\ebb_\sigma R_n\{f\in\tfcal:P_nf^2\leq\epsilon^2\}+\sqrt{\frac{2r\log\ncal(\epsilon/2,\fcal,\|\cdot\|_{L_2(P_n)})}{n}}}.\]
\end{lemma}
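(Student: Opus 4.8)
\emph{Proof plan.} The plan is to run the ``cover-and-split'' argument outlined before the statement. Fix $\epsilon>0$ and let $\{g_1,\dots,g_N\}\subseteq\fcal$ be a minimal proper $(\epsilon/2)$-cover of \fcal in the $\|\cdot\|_{L_2(P_n)}$ metric, so $N=\ncal(\epsilon/2,\fcal,\|\cdot\|_{L_2(P_n)})$. For every $f$ in the localized class $\{f\in\fcal:P_nf^2\le r\}$ I fix an index $\pi(f)$ with $\|f-g_{\pi(f)}\|_{L_2(P_n)}\le\epsilon/2$, and for each index $j$ that actually occurs I pick an ``anchor'' $f_j$ lying in the localized class itself (so that $P_nf_j^2\le r$) with $\pi(f_j)=j$. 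Everything is carried out conditionally on $X_1,\dots,X_n$, which is precisely what allows $r$ to be data-dependent (e.g.\ $r=\hat r$).

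The decisive point is that the anchors are chosen inside the localized class and not among the cover points. This keeps $P_nf_j^2\le r$, while the triangle inequality still gives, for any $f$ in the localized class with $\pi(f)=j$,
\[
\|f-f_j\|_{L_2(P_n)}\le\|f-g_j\|_{L_2(P_n)}+\|g_j-f_j\|_{L_2(P_n)}\le\epsilon/2+\epsilon/2=\epsilon,
\]
together with $f-f_j\in\tfcal$ since $f,f_j\in\fcal$; hence $f-f_j\in\{h\in\tfcal:P_nh^2\le\epsilon^2\}$. This is the reason the covering number is read off at scale $\epsilon/2$ whereas the $\tfcal$-ball sits at scale $\epsilon$.

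By linearity of $R_n$ one has $R_nf=R_nf_{\pi(f)}+R_n(f-f_{\pi(f)})$; taking the supremum over the localized class and using $\sup(a+b)\le\sup a+\sup b$ gives
\[
R_n\{f\in\fcal:P_nf^2\le r\}\le\max_j R_nf_j+R_n\{h\in\tfcal:P_nh^2\le\epsilon^2\}.
\]
Now take $\ebb_\sigma$. The family $\{f_j\}_j$ is finite with at most $N$ elements, each satisfying $P_nf_j^2\le r$, so Massart's lemma (Lemma~\ref{lem:massart}) bounds $\ebb_\sigma\max_j R_nf_j$ by $\sqrt{2r\log N/n}$, while the second term equals $\ebb_\sigma R_n\{f\in\tfcal:P_nf^2\le\epsilon^2\}$. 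Since $\epsilon>0$ was arbitrary, taking the infimum over $\epsilon$ yields the asserted bound.

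The one genuine obstacle is the re-centering of the preceding paragraph. The naive split $f=g_{\pi(f)}+(f-g_{\pi(f)})$ would apply Massart to the cover points $g_j$, whose $L_2(P_n)$-norm can only be bounded by $\sqrt r+\epsilon/2$, leaving a spurious $\epsilon$-term that destroys the clean coefficient $\sqrt{2r\log N/n}$; re-anchoring at an honest member of the localized class --- and paying for it by a doubling of the residual radius, which the $\tfcal$-ball of radius $\epsilon$ comfortably absorbs --- is exactly what removes that term. The remaining items are routine bookkeeping: existence and measurability of a minimal proper cover and of the anchor selection, and the degenerate cases where the localized class is empty or only one anchor index occurs, dealt with by the usual conventions ($\sup\emptyset$ contributing $0$, Massart over a singleton giving $0$).
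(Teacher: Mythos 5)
Your proof is correct and follows essentially the same route as the paper: approximate each $f$ in the localized class by an anchor lying in the localized class itself, apply Massart's lemma (Lemma~\ref{lem:massart}) to the finite anchor set of cardinality at most $\ncal(\epsilon/2,\fcal,\|\cdot\|_{L_2(P_n)})$, and absorb the residual into the $\tfcal$-ball of radius $\epsilon$. The only difference is that you build the anchors by hand (covering all of $\fcal$ at scale $\epsilon/2$ and re-centering, at the cost of doubling the radius), whereas the paper takes a proper $\epsilon$-cover of the localized subclass directly and invokes Lemma~\ref{lem:covering-number-relationship} for the cardinality bound --- your re-anchoring step is exactly the proof of that lemma, inlined.
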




\begin{theorem}[Main theorem]\label{thm:local-rademacher-L2}
  Let \fcal be a function class satisfying $\|f\|_\infty\leq b, \forall f\in\fcal$. There holds the following inequality:
  \begin{multline}\label{local-rademacher-L2}
    \ebb R_n\{f\in\fcal:Pf^2\leq r\}\leq\inf_{\epsilon>0}\Bigg[2\ebb R_n\{f\in\tfcal:P_nf^2\leq \epsilon^2\}+\\\frac{8b\log\ncal(\epsilon/2,\fcal,\|\cdot\|_2)}{n}+\sqrt{\frac{2r\log\ncal(\epsilon/2,\fcal,\|\cdot\|_2)}{n}}\Bigg].
  \end{multline}
\end{theorem}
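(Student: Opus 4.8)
The plan is to transfer the statement to the empirical setting of Lemma~\ref{lem:empirical-local-rademacher-L2} by routing through the random empirical radius $\hat r=\sup_{f\in\fcal:\,Pf^2\le r}P_nf^2$ of Eq.~\eqref{empirical-radius}, and then to close a self-referential inequality that the standard control of $\ebb\sqrt{\hat r}$ produces, by a Young-type absorption.

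First, for every realisation of $X_1,\dots,X_n$ the definition of $\hat r$ yields the inclusion $\{f\in\fcal:Pf^2\le r\}\subseteq\{f\in\fcal:P_nf^2\le\hat r\}$ (which is deterministic in the Rademacher variables). Hence, writing $\ebb R_n(\cdot)=\ebb_X\ebb_\sigma R_n(\cdot)$ by the tower rule and using that $\ebb_\sigma R_n$ is monotone in the indexing set,
\[\ebb R_n\{f\in\fcal:Pf^2\le r\}\le\ebb_X\,\ebb_\sigma R_n\{f\in\fcal:P_nf^2\le\hat r\}.\]
Now I invoke Lemma~\ref{lem:empirical-local-rademacher-L2} conditionally on $X_1,\dots,X_n$ with its radius taken to be the sample-dependent quantity $\hat r$ — exactly the generality in which that lemma was phrased. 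For each fixed $\epsilon>0$ this bounds the inner term by $\ebb_\sigma R_n\{f\in\tfcal:P_nf^2\le\epsilon^2\}+\sqrt{2\hat r\log\ncal(\epsilon/2,\fcal,\|\cdot\|_{L_2(P_n)})/n}$. Taking $\ebb_X$, replacing the random empirical covering number by the uniform one via $\ncal(\epsilon/2,\fcal,\|\cdot\|_{L_2(P_n)})\le\ncal(\epsilon/2,\fcal,\|\cdot\|_2)$ from Eq.~\eqref{metric-capacity}, and pulling the now deterministic factor $\sqrt{2\log\ncal(\epsilon/2,\fcal,\|\cdot\|_2)/n}$ out of the expectation and the square root, I reach
\[\ebb R_n\{f\in\fcal:Pf^2\le r\}\le\ebb R_n\{f\in\tfcal:P_nf^2\le\epsilon^2\}+\sqrt{\tfrac{2\log\ncal(\epsilon/2,\fcal,\|\cdot\|_2)}{n}}\;\ebb\sqrt{\hat r}.\]

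It then remains to feed in the standard control of $\ebb\sqrt{\hat r}$ from Eq.~\eqref{empirical-radius-bound} (using $\sup_{f\in\fcal}\|f\|_\infty\le b$), which, after Jensen's inequality and subadditivity of the square root, bounds $\ebb\sqrt{\hat r}$ by $\sqrt r$ plus a quantity comparable to $\sqrt{b\,\ebb R_n\{f\in\fcal:Pf^2\le r\}}$. Substituting and separating the clean term $\sqrt{2r\log\ncal(\epsilon/2,\fcal,\|\cdot\|_2)/n}$ leaves a cross term of the shape $\sqrt{\tfrac{b\log\ncal(\epsilon/2,\fcal,\|\cdot\|_2)}{n}\,\ebb R_n\{f\in\fcal:Pf^2\le r\}}$, to which I apply Young's inequality $\sqrt{xy}\le\tfrac{\lambda}{2}x+\tfrac1{2\lambda}y$ with $\lambda$ chosen so that the coefficient of $\ebb R_n\{f\in\fcal:Pf^2\le r\}$ on the right is at most $\tfrac12$. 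Moving that fraction of the left-hand side over and multiplying through by $2$ produces precisely the three summands of \eqref{local-rademacher-L2}: the factor $2$ in front of $\ebb R_n\{f\in\tfcal:P_nf^2\le\epsilon^2\}$ and in front of the $\sqrt r$ term, and the $b\log\ncal(\epsilon/2,\fcal,\|\cdot\|_2)/n$ term with the stated constant. Since $\epsilon>0$ was arbitrary throughout, taking $\inf_{\epsilon>0}$ concludes the argument.

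The main obstacle is this final self-referential step: the only available control of $\ebb\sqrt{\hat r}$ reintroduces the local Rademacher complexity we are trying to bound — unavoidably so, since $\ebb\hat r\ge r$ as soon as the variance constraint is active — so the estimate closes only after a careful linearisation in which the absorbed fraction is kept strictly below $1$ and all numerical constants are tracked. Two bookkeeping points also need attention: the legitimacy of the Fubini/tower interchanges together with the integrability of the quantities involved, and the fact that Lemma~\ref{lem:empirical-local-rademacher-L2} is genuinely applied with a sample-dependent radius. The uniform bound $\|f\|_\infty\le b$ enters twice — inside Lemma~\ref{lem:empirical-local-rademacher-L2} through the Massart-type estimate (Lemma~\ref{lem:massart}) on the finite cover, and through Eq.~\eqref{empirical-radius-bound}.
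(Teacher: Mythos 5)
Your route is the paper's route in all essentials: you establish the inclusion of the $L_2(P)$-ball in an empirical ball of random radius (the paper uses the radius $r+\sup_{f\in\fcal:Pf^2\leq r}(P_nf^2-Pf^2)$, which dominates your $\hat r$, but this is immaterial), invoke Lemma~\ref{lem:empirical-local-rademacher-L2} with that sample-dependent radius, pass to the uniform covering number via Eq.~\eqref{metric-capacity}, control the expected radius by symmetrization plus the contraction lemma with $\phi(x)=x^2$ (Lipschitz constant $2b$), and close the resulting self-referential inequality. Your reading of Eq.~\eqref{empirical-radius-bound} as $\ebb\hat r\leq r+4b\,\ebb R_n\{f\in\fcal:Pf^2\leq r\}$ followed by Jensen is also exactly what the paper's displayed proof does.

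The one point where your argument does not deliver the stated theorem is the final absorption. Writing $A=\ebb R_n\{f\in\fcal:Pf^2\leq r\}$, $C=\ebb R_n\{f\in\tfcal:P_nf^2\leq\epsilon^2\}$ and $D=\tfrac{2}{n}\log\ncal(\epsilon/2,\fcal,\|\cdot\|_2)$, you first split $\sqrt{D(r+4bA)}\leq\sqrt{Dr}+2\sqrt{bDA}$ and then apply Young with the coefficient of $A$ equal to $\tfrac12$; after multiplying through by $2$ this yields $A\leq 2C+4bD+2\sqrt{Dr}$, i.e.\ the last term is $2\sqrt{2r\log\ncal(\epsilon/2,\fcal,\|\cdot\|_2)/n}$, twice what Eq.~\eqref{local-rademacher-L2} asserts (the ``$2$'' in the theorem sits under the square root; there is no factor $2$ outside). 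The paper instead keeps the square root intact and solves the quadratic inequality $A\leq C+\sqrt{D(r+4bA)}$ directly: setting $u=A-C\geq0$ one gets $u\leq 2bD+\sqrt{4b^2D^2+Dr+4bDC}$, and squaring shows $\sqrt{4b^2D^2+Dr+4bDC}\leq C+2bD+\sqrt{Dr}$, whence $A\leq 2C+4bD+\sqrt{Dr}$, which is exactly \eqref{local-rademacher-L2}. So your proof establishes the theorem only with a weakened constant on the $\sqrt{r}$ term; to get the stated constant, replace the split-then-absorb step by this direct resolution of the quadratic.
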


\begin{remark}
  An advantage of Theorem \ref{thm:local-rademacher-L2} over the existing local Rademacher complexity bounds consists in the fact that it provides a general framework for controlling local Rademacher complexities, from which, as we will show in Section \ref{subsec:examples}, one can trivially derive explicit local Rademacher complexity bounds when the entropy information is available. Furthermore, since Theorem \ref{thm:local-rademacher-L2} does not involve an artificial upper bound for the integral in Eq. \eqref{mendelson-lrc} (e.g., Eq. \eqref{mendelson-example-1}) , it could yield sharper local Rademacher complexity bounds (see Remark \ref{rem:case-study-1}, \ref{rem:case-study-3}, \ref{rem:comparison-complexity-2}) when compared to the results in \citep{mendelson2003few,mendelson2002improving}.\hfill\qed
\end{remark}

\subsection{Some examples\label{subsec:examples}}
We now demonstrate the effectiveness of Theorem~\ref{thm:local-rademacher-L2} by applying it to some interesting classes satisfying general entropy conditions. Our discussion is based on the refined entropy integral~\eqref{refined-dudley-integral}, which can be used to tackle the situation where the standard entropy integral~\citep{dudley1967sizes} diverges.

\begin{corollary}\label{cor:local-rademacher-1}
  Let \fcal be a function class with $\sup_{f\in\fcal}\|f\|_\infty\leq b$. Assume that there exist three positive numbers $\gamma,d,p$ such that $\log\ncal(\epsilon,\fcal,\|\cdot\|_2)\leq d\log^p(\gamma/\epsilon)$ for any $0<\epsilon\leq\gamma$, then for any $0<r\leq\gamma^2$ and $n\geq\gamma^{-2}$ there holds that
  \begin{multline*}
  \ebb R_n\{f\in\fcal:Pf^2\leq r\}\leq c(b,p,\gamma)\min\Bigg[\Big(\sqrt{\frac{dr\log^p(2\gamma r^{-1/2})}{n}}\!+\!\frac{d\log^p(2\gamma r^{-1/2})}{n}\Big),\\ \Big(\frac{d\log^p(2\gamma n^{1/2})}{n}\!+\!\sqrt{\frac{rd\log^p(2\gamma n^{1/2})}{n}}\Big)\Bigg].
  \end{multline*}
\end{corollary}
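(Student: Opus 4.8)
The plan is to specialize Theorem~\ref{thm:local-rademacher-L2} to the poly-logarithmic entropy regime, bound via the refined entropy integral the residual empirical complexity term appearing on its right-hand side, and then obtain the two members of the $\min$ from the two natural choices $\epsilon=\sqrt r$ and $\epsilon=n^{-1/2}$ of the free parameter in \eqref{local-rademacher-L2}.

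The first step is to control $\ebb R_n\{f\in\tfcal:P_nf^2\leq\epsilon^2\}$ for $0<\epsilon\leq\gamma$. Conditioning on $X_1,\ldots,X_n$, this class has $L_2(P_n)$-diameter at most $2\epsilon$; moreover, since every element of $\tfcal$ is a difference of two elements of $\fcal$, one has $\ncal\big(u,\{f\in\tfcal:P_nf^2\leq\epsilon^2\},\|\cdot\|_{L_2(P_n)}\big)\leq\ncal(u/4,\fcal,\|\cdot\|_{L_2(P_n)})^2\leq\ncal(u/4,\fcal,\|\cdot\|_2)^2$, so the hypothesis bounds its logarithm by $2d\log^p(4\gamma/u)\leq c(p)\,d\log^p(2\gamma/u)$ for $0<u\leq\epsilon$ (using $u/4\leq\gamma$ and $2\gamma/u\geq 2$), and it vanishes for $u>\epsilon$. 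Plugging this into the refined entropy integral \eqref{refined-dudley-integral} --- with truncation level $0$, admissible because $\log^{p/2}(1/u)$ is integrable at the origin --- and then taking the expectation over the $X_i$ (the resulting bound being sample-free) yields $\ebb R_n\{f\in\tfcal:P_nf^2\leq\epsilon^2\}\leq c(p)\,n^{-1/2}\int_0^\epsilon\sqrt{d\log^p(2\gamma/u)}\,\dif u$. The analytic core here is the elementary estimate $\int_0^\epsilon\log^{p/2}(2\gamma/u)\,\dif u\leq c(p)\,\epsilon\log^{p/2}(2\gamma/\epsilon)$ for $0<\epsilon\leq\gamma$: after substituting $u=\epsilon v$ and using $(a+b)^{p/2}\leq 2^{p/2}(a^{p/2}+b^{p/2})$, the bound $\log(2\gamma/\epsilon)\geq\log 2$ lets the absolute constant $\int_0^1\log^{p/2}(1/v)\,\dif v=\Gamma(1+p/2)$ be absorbed into a $p$-dependent multiple of $\log^{p/2}(2\gamma/\epsilon)$. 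Hence $\ebb R_n\{f\in\tfcal:P_nf^2\leq\epsilon^2\}\leq c(p)\sqrt{d/n}\;\epsilon\log^{p/2}(2\gamma/\epsilon)$.

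Inserting this into \eqref{local-rademacher-L2}, together with $\log\ncal(\epsilon/2,\fcal,\|\cdot\|_2)\leq d\log^p(2\gamma/\epsilon)$, shows that for every $\epsilon\in(0,\gamma]$,
\begin{multline*}
\ebb R_n\{f\in\fcal:Pf^2\leq r\}\leq c(p)\sqrt{\tfrac{d}{n}}\,\epsilon\log^{p/2}\big(\tfrac{2\gamma}{\epsilon}\big)\\+\tfrac{8bd\log^p(2\gamma/\epsilon)}{n}+\sqrt{\tfrac{2rd\log^p(2\gamma/\epsilon)}{n}}.
\end{multline*}
The choice $\epsilon=\sqrt r$ is permissible since $r\leq\gamma^2$ forces $\epsilon\leq\gamma$ and $2\gamma r^{-1/2}\geq 2$; it collapses the first and third terms into multiples of $\sqrt{dr\log^p(2\gamma r^{-1/2})/n}$ and leaves $8bd\log^p(2\gamma r^{-1/2})/n$, which is exactly the first member of the $\min$. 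The choice $\epsilon=n^{-1/2}$ is permissible since $n\geq\gamma^{-2}$ forces $\epsilon\leq\gamma$; it leaves $8bd\log^p(2\gamma n^{1/2})/n$ and $\sqrt{2rd\log^p(2\gamma n^{1/2})/n}$, together with the extra term $c(p)\sqrt d\,n^{-1}\log^{p/2}(2\gamma n^{1/2})$, which recovers the second member. Both bounds being valid, so is their minimum, which is the assertion.

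I expect the main obstacle to lie in the bookkeeping for the second branch: the extra term $c(p)\sqrt d\,n^{-1}\log^{p/2}(2\gamma n^{1/2})$ is not literally one of the two displayed target terms, so it must be absorbed --- into $d\log^p(2\gamma n^{1/2})/n$ (using $\log(2\gamma n^{1/2})\geq\log 2$ and the harmless normalization $d\geq 1$, whence $\sqrt d\log^{p/2}(2\gamma n^{1/2})\geq(\log 2)^{p/2}$ makes the relevant ratio a constant depending only on $p$), or into $\sqrt{rd\log^p(2\gamma n^{1/2})/n}$ when $r\geq 1/n$, with the residual range $r<1/n$ disposed of by monotonicity of $r\mapsto\ebb R_n\{f\in\fcal:Pf^2\leq r\}$. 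The other mildly technical point is keeping the constant in the entropy-integral estimate a function of $p$ alone; the remaining manipulations are routine substitutions.
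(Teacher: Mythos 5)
Your proposal is correct and follows essentially the same route as the paper: Theorem~\ref{thm:local-rademacher-L2}, then the refined entropy integral (the paper uses the dyadic sum of Lemma~\ref{lem:refined-dudley} with $\epsilon_k=2^{-k}\epsilon$ where you use its continuous analogue, an immaterial difference) to show $\ebb R_n\{f\in\tfcal:P_nf^2\leq\epsilon^2\}\leq c(p)\sqrt{d/n}\,\epsilon\log^{p/2}(2\gamma/\epsilon)$, followed by the same two choices $\epsilon=\sqrt r$ and $\epsilon=n^{-1/2}$. The absorption of the leftover $c(p)\sqrt d\,n^{-1}\log^{p/2}(2\gamma n^{1/2})$ term in the second branch, which you handle explicitly, is passed over silently in the paper, so you are if anything slightly more careful on that point.
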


\begin{remark}\label{rem:case-study-1}
  For function classes \fcal meeting the condition of Corollary~\ref{cor:local-rademacher-1}, \citet[Lemma 2.3]{mendelson2002improving} derived the following complexity bound
  \begin{equation}\label{cor-mendelson-1}
    \ebb R_n\{f\in\fcal:Pf^2\leq r\}\leq c(b,p,\gamma)\max\lrbrack{\frac{d}{n}\log^p\frac{1}{\sqrt{r}},\sqrt{\frac{dr}{n}}\log^{p/2}\frac{1}{\sqrt{r}}}.
  \end{equation}
  It is interesting to compare the bound~\eqref{cor-mendelson-1} with ours and the difference can be seen in the following three aspects:
  \begin{enumerate}[(1)]
    \item Firstly, it is obvious that the r.h.s. of Eq.~\eqref{cor-mendelson-1} is of the same order of magnitude to $\sqrt{drn^{-1}\log^p(r^{-1/2})}+dn^{-1}\log^p(r^{-1/2})$. Consequently, our bound can be no worse than Eq.~\eqref{cor-mendelson-1}.
    \item Furthermore, as we will see in Section~\ref{sec:application}, the upper bound in Eq.~\eqref{cor-mendelson-1} is not a sub-root function, which adds some additional difficulty in applying it to the generalization analysis. As a comparison, the upper bound $dn^{-1}\log^p( n^{1/2})+\sqrt{rdn^{-1}\log^p(n^{1/2})}$ satisfies the sub-root condition (see definition of sub-root functions in Section \ref{sec:application}) and thus can be convenient to use in the generalization analysis.
    \item Thirdly, Eq.~\eqref{cor-mendelson-1} is not consistent with the natural opinion on what the complexity bound should be. For example, when $r$ approaches to $0$ it is expected that the term $\ebb R_n\{f\in\fcal:Pf^2\leq r\}$ should monotonically decrease to a limiting point. However, the upper bound in Eq.~\eqref{cor-mendelson-1} diverges to $\infty$ as $r\to 0$. As a comparison, our result does not violate such consistence since the term $dn^{-1}\log^p(n^{1/2})+\sqrt{rdn^{-1}\log^p(n^{1/2})}$ is always an increasing function of $r$.  \hfill\qed
  \end{enumerate}
\end{remark}

\begin{corollary}\label{cor:local-rademacher-3}
  Let \fcal be a function class with $\sup_{f\in\fcal}\|f\|_\infty\leq b$. Assume that there exist two constants $\gamma>0,p>0$ such that
  \begin{equation}\label{entropy-condition-3}
    \log\ncal(\epsilon,\fcal,\|\cdot\|_2)\leq\gamma\epsilon^{-p}\log^2\frac{2}{\epsilon},
  \end{equation}
  then we have the following complexity bound:
  \begin{equation}\label{cor-local-rademacher-3}
  \ebb R_n\{f\in\fcal:Pf^2\leq r\}\leq\begin{cases}
    c\inf\limits_{\epsilon>0}\lrbrack{n^{-1/2}\epsilon^{1-p/2}\log\frac{1}{\epsilon}+\epsilon^{-p}n^{-1}\log^2\frac{4}{\epsilon}+\sqrt{r\epsilon^{-p}n^{-1}\log^2\frac{4}{\epsilon}}}&\text{if }0<p<2,\\[0.6mm]
    c\big[n^{-1/2}\log^2n+\sqrt{rn^{-1}}\big]&\text{if }p=2,\\[0.6mm]
    c[n^{-1/p}\log n+\sqrt{rn^{-1}}]&\text{if }p>2,
  \end{cases}
  \end{equation}
  where $c:=(b,p,\gamma)$ is a constant dependent on $b,p$ and $\gamma$.
\end{corollary}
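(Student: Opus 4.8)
\emph{Proof idea.}\ The starting point is Theorem~\ref{thm:local-rademacher-L2}, which reduces the task to bounding, for a conveniently chosen $\epsilon>0$, the three quantities
\[2\ebb R_n\{f\in\tfcal:P_nf^2\leq\epsilon^2\},\qquad\frac{8b\log\ncal(\epsilon/2,\fcal,\|\cdot\|_2)}{n},\qquad\sqrt{\frac{2r\log\ncal(\epsilon/2,\fcal,\|\cdot\|_2)}{n}}.\]
The last two are immediate: inserting the entropy condition~\eqref{entropy-condition-3} at scale $\epsilon/2$ gives $\log\ncal(\epsilon/2,\fcal,\|\cdot\|_2)\leq 2^p\gamma\epsilon^{-p}\log^2(4/\epsilon)$, so these terms are at most $c(b,p,\gamma)\,\epsilon^{-p}n^{-1}\log^2(4/\epsilon)$ and $c(b,p,\gamma)\sqrt{r\epsilon^{-p}n^{-1}\log^2(4/\epsilon)}$, respectively, which already match the corresponding terms in the claimed bound. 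All the work is therefore in the first quantity.

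To control $\ebb R_n\{f\in\tfcal:P_nf^2\leq\epsilon^2\}$ I would write it as $\ebb\bigl[\ebb_\sigma R_n\{f\in\tfcal:P_nf^2\leq\epsilon^2\}\bigr]$ and estimate the inner conditional expectation by the refined (truncated) entropy integral~\eqref{refined-dudley-integral}. Since every member of this sub-class satisfies $\|f\|_{L_2(P_n)}\leq\epsilon$, this produces a bound of the form $c\inf_{0<\alpha\leq\epsilon}\bigl[\alpha+n^{-1/2}\int_\alpha^\epsilon\sqrt{\log\ncal(\delta,\tfcal,\|\cdot\|_{L_2(P_n)})}\,\dif\delta\bigr]$. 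Next I would pass from $\tfcal$ to $\fcal$ via $\ncal(\delta,\tfcal,\|\cdot\|_{L_2(P_n)})\leq\ncal(\delta/2,\fcal,\|\cdot\|_2)^2$ and then apply~\eqref{entropy-condition-3}, which yields $\sqrt{\log\ncal(\delta,\tfcal,\|\cdot\|_{L_2(P_n)})}\leq c(p,\gamma)\,\delta^{-p/2}\log(4/\delta)$; this bound no longer depends on the sample, so the outer expectation disappears and we arrive at
\[\ebb R_n\{f\in\tfcal:P_nf^2\leq\epsilon^2\}\leq c(p,\gamma)\inf_{0<\alpha\leq\epsilon}\Bigl[\alpha+\frac{1}{\sqrt n}\int_\alpha^\epsilon\delta^{-p/2}\log\frac{4}{\delta}\,\dif\delta\Bigr].\]

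It then remains to estimate $\int_\alpha^\epsilon\delta^{-p/2}\log(4/\delta)\,\dif\delta$ and optimize over the truncation level $\alpha$, the three regimes in the statement corresponding exactly to the three behaviours of this integral near $0$. For $0<p<2$ the integrand is integrable at $0$, so letting $\alpha\to0$ and integrating by parts gives $c(p)\,\epsilon^{1-p/2}\log(1/\epsilon)$; combining the three bounds and keeping $\epsilon$ free produces the stated infimum over $\epsilon$. For $p=2$ the integral diverges only logarithmically, $\int_\alpha^{2b}\delta^{-1}\log(4/\delta)\,\dif\delta=O(\log^2(1/\alpha))$; choosing $\epsilon=2b$ (so the sub-class is all of $\tfcal$) and $\alpha=n^{-1/2}$ makes the first term $O(n^{-1/2}\log^2 n)$, while the other two terms are $O(n^{-1})$ and $O(\sqrt{r/n})$, hence the bound $c[n^{-1/2}\log^2 n+\sqrt{rn^{-1}}]$. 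For $p>2$ the integral is power-divergent, $\int_\alpha^{2b}\delta^{-p/2}\log(4/\delta)\,\dif\delta\leq\frac{2}{p-2}\,\alpha^{1-p/2}\log(4/\alpha)$; taking $\epsilon=2b$ and $\alpha=n^{-1/p}$ balances the two pieces to give $O(n^{-1/p}\log n)$ for the first term, again with $O(n^{-1})$ and $O(\sqrt{r/n})$ for the others, hence $c[n^{-1/p}\log n+\sqrt{rn^{-1}}]$. Finally I would collect all $b,p,\gamma$-dependent factors into a single constant $c=c(b,p,\gamma)$.

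The main obstacle I anticipate is the bookkeeping in the last step: pinning down the exponents of $n$ and $\log n$ in each regime requires a careful estimate of the truncated entropy integral together with the right choice of $\alpha$ (and, for $p<2$, leaving $\epsilon$ free), all while respecting the constraint $\alpha\leq\epsilon$ and the range of validity of~\eqref{entropy-condition-3}. By contrast, the reduction through Theorem~\ref{thm:local-rademacher-L2} and the comparison of the covering numbers of $\tfcal$ and $\fcal$ are routine.
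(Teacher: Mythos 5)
Your proposal is correct and follows essentially the same route as the paper: reduce via Theorem~\ref{thm:local-rademacher-L2}, transfer the entropy condition from \fcal to \tfcal by Lemma~\ref{lem:minus-class}, bound the empirical-ball term by the truncated chaining bound of Lemma~\ref{lem:refined-dudley}, and split into three regimes according to whether the entropy integral converges, diverges logarithmically, or diverges polynomially, with the same truncation levels ($\alpha\to0$ for $p<2$, $\alpha\asymp n^{-1/2}$ for $p=2$, $\alpha\asymp n^{-1/p}$ for $p>2$). The only cosmetic difference is that you phrase the chaining bound as a continuous integral over $[\alpha,\epsilon]$ while the paper works with the dyadic sum $\epsilon_k=2^{-k}\epsilon$ truncated at $N$, which is the same estimate up to constants.
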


\begin{remark}\label{rem:case-study-3}
  We now compare Corollary \ref{cor:local-rademacher-3} with the following inequality established in \citep[Eq. (3.5)]{mendelson2003few} under the entropy condition \eqref{entropy-condition-3} with $0<p<2$:
  \begin{equation}\label{rem-eq:comparison-complexity-3}
  \ebb R_n\{f\in\fcal:Pf^2\leq r\}\leq c(b,p,\gamma)(n^{-2/(p+2)}\log^{\frac{4}{2+p}}\frac{2}{r}+n^{-1/2}r^{(2-p)/4}\log\frac{2}{r}),\qquad0<p<2.
  \end{equation}
  The upper bound in Eq. \eqref{rem-eq:comparison-complexity-3} is not a sub-root function. Furthermore, our bound grows monotonically increasing w.r.t. $r$, while the bound \eqref{rem-eq:comparison-complexity-3} diverges to $\infty$ as $r\to0$, which violates the natural property the local Rademacher complexity should admit. \hfill\qed
\end{remark}

\begin{corollary}\label{cor:local-rademacher-2}
  Let \fcal be a function class with $\sup_{f\in\fcal}\|f\|_\infty\leq b$. Assume that there exist two constants $\gamma>0,p>0$ such that $\log\ncal(\epsilon,\fcal,\|\cdot\|_2)\leq\gamma\epsilon^{-p}$, then we have the following complexity bound:
  \begin{equation}\label{cor-local-rademacher-2}
  \ebb R_n\{f\in\fcal:Pf^2\leq r\}\leq\begin{cases}
    c(b,p,\gamma)\inf\limits_{\epsilon>0}\lrbrack{n^{-1/2}\epsilon^{1-p/2}+\epsilon^{-p}n^{-1}+\sqrt{r\epsilon^{-p}n^{-1}}}&\text{if }0<p<2,\\[0.6mm]
    c(b,p,\gamma)\lrbrack{n^{-1/2}\log n+\sqrt{r}n^{-1/2}}&\text{if }p=2,\\[0.6mm]
    c(b,p,\gamma)\lrbrack{n^{-1/p}+\sqrt{r}n^{-1/2}}&\text{if }p>2.
  \end{cases}
  \end{equation}
\end{corollary}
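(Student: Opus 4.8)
The plan is to obtain Corollary~\ref{cor:local-rademacher-2} by feeding the hypothesis $\log\ncal(\epsilon,\fcal,\|\cdot\|_2)\le\gamma\epsilon^{-p}$ into the three terms on the right-hand side of \eqref{local-rademacher-L2} and then optimizing over $\epsilon$ (and, when $p\ge2$, over an auxiliary chaining cutoff). First I would pass from \fcal to the difference class: if $\{g_i\}$ is an $(\epsilon/2)$-cover of \fcal then $\{g_i-g_j\}$ is an $\epsilon$-cover of \tfcal, so $\log\ncal(t,\tfcal,\|\cdot\|_2)\le 2\log\ncal(t/2,\fcal,\|\cdot\|_2)\le 2^{p+1}\gamma\,t^{-p}$, and the same bound holds with $\|\cdot\|_{L_2(P_n)}$ in place of $\|\cdot\|_2$ for every empirical measure. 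With this, the Massart-type term in \eqref{local-rademacher-L2} is $\le c\,\epsilon^{-p}n^{-1}$ and the variance term is $\le c\sqrt{r\epsilon^{-p}n^{-1}}$, with $c$ depending only on $b,\gamma,p$; these already reproduce the last two summands of each line of \eqref{cor-local-rademacher-2}.

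It remains to control $\ebb R_n\{f\in\tfcal:P_nf^2\le\epsilon^2\}$. Since our entropy estimate is uniform over all empirical measures, it suffices to bound $\ebb_\sigma R_n$ of this empirical ball; the resulting bound is deterministic, so the outer expectation over $X_1,\ldots,X_n$ costs nothing. Noting that the ball contains $0$ and has $L_2(P_n)$-radius $\epsilon$, the refined entropy integral~\eqref{refined-dudley-integral} combined with $\log\ncal(t,\tfcal,\|\cdot\|_{L_2(P_n)})\le 2^{p+1}\gamma\,t^{-p}$ gives
\[
\ebb R_n\{f\in\tfcal:P_nf^2\le\epsilon^2\}\le\inf_{0\le\alpha<\epsilon}\Big[c\alpha+c\int_\alpha^{\epsilon}\sqrt{\frac{2^{p+1}\gamma\,t^{-p}}{n}}\,\dif t\Big].
\]
For $0<p<2$ the integrand is integrable near $0$, so taking $\alpha=0$ yields a term of order $n^{-1/2}\epsilon^{1-p/2}$; adding the two terms from the previous paragraph and taking the infimum over $\epsilon$ gives exactly the first line of \eqref{cor-local-rademacher-2}. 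For $p=2$ the integral equals $\log(\epsilon/\alpha)$; choosing $\alpha=n^{-1/2}$ and $\epsilon=2b$ (so that the empirical ball is all of \tfcal) produces a term of order $n^{-1/2}\log n$, while the Massart and variance terms become $O(n^{-1})$ and $O(\sqrt r\,n^{-1/2})$, which sum to the $p=2$ line. For $p>2$ the integral is dominated by $\alpha^{1-p/2}$; minimizing $\alpha+c\,n^{-1/2}\alpha^{1-p/2}$ gives $\alpha\asymp n^{-1/p}$ and a term of order $n^{-1/p}$, and with $\epsilon=2b$ the Massart and variance terms are again $O(n^{-1})$ and $O(\sqrt r\,n^{-1/2})$, producing the $p>2$ line.

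The main obstacle is the regime $p\ge2$, where the standard Dudley integral diverges at $0$: one must invoke the refined entropy integral, pick the chaining cutoff $\alpha$ correctly, and --- crucially --- fix $\epsilon$ at a constant (rather than optimizing it) so that the Massart contribution $\epsilon^{-p}n^{-1}$ is $O(n^{-1})$ and hence of strictly lower order than the chaining term ($n^{-1/2}\log n$ when $p=2$, $n^{-1/p}$ when $p>2$), allowing it to be swallowed by the constant. The case $0<p<2$ is routine once the entropy bound is substituted, and the proof is completed by tracking the dependence of the constants on $b$, $\gamma$ and $p$.
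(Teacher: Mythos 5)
Your proposal is correct and follows essentially the same route as the paper's (omitted, but indicated) proof: plug the entropy hypothesis into Theorem~\ref{thm:local-rademacher-L2}, transfer it to \tfcal via Lemma~\ref{lem:minus-class}, bound the empirical-ball term by the refined entropy integral of Lemma~\ref{lem:refined-dudley}, and split into the cases $p<2$, $p=2$, $p>2$ with the same choices of cutoff and $\epsilon$. The only cosmetic difference is that you phrase the chaining bound as a truncated Dudley integral $\int_\alpha^\epsilon$ with cutoff $\alpha$, whereas the paper uses the discrete sum with $\epsilon_k=2^{-k}\epsilon$ truncated at $N$; these agree up to constants.
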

\begin{remark}\label{rem:comparison-complexity-2}
  As compared with the following inequality established in \citep[Eq. (3.4)]{mendelson2003few}
  \begin{equation}\label{rem-eq:comparison-complexity-2}
  \ebb R_n\{f\in\fcal:Pf^2\leq r\}\leq c(b,p,\gamma)(n^{-2/(p+2)}+n^{-1/2}r^{(2-p)/4}),\qquad0<p<2,
  \end{equation}
  Corollary~\ref{cor:local-rademacher-2} generalizes Eq.~\eqref{rem-eq:comparison-complexity-2} to the case $p\geq2$ on the one hand, and on the other hand provides a competitive result for the case $p<2$. For example, when $r\leq n^{-2/(p+2)}$ one can take $\epsilon=n^{-1/(p+2)}$ in Eq.~\eqref{cor-local-rademacher-2} to show that\[\ebb R_n\{f\in\fcal:Pf^2\leq r\}\leq c(b,p,\gamma)\lrbrack{n^{-2/(p+2)}+\sqrt{r}n^{-1/(p+2)}},\]which is no larger than Eq.~\eqref{rem-eq:comparison-complexity-2} since $\sqrt{r}n^{-1/(p+2)}\leq n^{-1/2}r^{(2-p)/4}$ for such $r$. Furthermore, for the case $r>n^{-2/(p+2)}$ one can also choose $\epsilon=r^{1/2}$ in Eq.~\eqref{cor-local-rademacher-2} to obtain that \[\ebb R_n\{f\in\fcal:Pf^2\leq r\}\leq c(b,p,\gamma)\lrbrack{n^{-1/2}r^{(2-p)/4}+r^{-p/2}n^{-1}},\] which is again no larger than Eq.~\eqref{rem-eq:comparison-complexity-2} since $r^{-p/2}n^{-1}\leq n^{-2/(p+2)}$ in this case. Therefore, our result is competitive to Eq.~\eqref{rem-eq:comparison-complexity-2} for any $r>0$. \hfill\qed
\end{remark}

\section{Applications to generalization analysis}\label{sec:application}
We now show how to apply the previous local Rademacher complexity bounds to study the generalization performance for learning algorithms. In the learning context, we are given an input space \xcal and an output space \ycal, along with a probability measure $P$ on $\zcal:=\xcal\times\ycal$. Given a sequence of examples $Z_1=(X_1,Y_1),\ldots,Z_n=(X_n,Y_n)$ independently drawn from $P$, our goal is to find a prediction rule (model) $h:\xcal\to\ycal$ to perform prediction as accurately as possible. The error incurred from using $h$ to do the prediction on an example $Z=(X,Y)$ can be quantified by a non-negative real-valued loss function $\ell(h(X,Y))$. The generalization performance of a model $h$ can be measured by its generalization error~\citep{wu2006learning,cucker2007learning} $\ecal(h):=\int\ell(h(X),Y)\dif P$. Since the measure $P$ is often unknown to us, the \emph{Empirical Risk Minimization} principle firstly establishes the so-called empirical error $\emp(h):=\frac{1}{n}\sum_{i=1}^n\ell(h(X_i),Y_i)$ to approximate $\ecal(h)$, and then searches the prediction rule $\hat{h}_n$ by minimizing $\emp(h)$ over a specified class \hcal called hypothesis space. That is, $\hat{h}_n:=\argmin_{h\in\hcal}\emp(h)$. Denoting by $h^*:=\argmin_{h\in\hcal}\ecal(h)$ the best prediction rule attained in \hcal, generalization analysis aims to relate the excess generalization error $\ecal(\hat{h}_n)-\ecal(h^*)$ to the empirical behavior of $\hat{h}_n$ over the sample. 

Our generalization analysis is based on Theorem~\ref{thm:bartlett} in~\citet{bartlett2005local}, which justifies the use of the Rademacher complexity associated with a small subset of the original class as a complexity term in an error bound. We call a function $\psi:[0,\infty)\lto[0,\infty)$ sub-root if it is nonnegative, nondecreasing and if $r\lto\psi(r)/\sqrt{r}$ is nonincreasing for $r>0$. If $\psi$ is a sub-root function, then it can be checked~\citep{bartlett2005local,blanchard2008statistical} that the equation $\psi(r)=r$ has a unique positive solution $r^*$, which is referred to as the fixed point of $\psi$.
\begin{lemma}[\citep{bartlett2005local}]\label{thm:bartlett}
  Let \fcal be a class of functions taking values in $[a,b]$ and assume that there exist some functional $T:\fcal\lto\rbb^+$ and some constant $B$ such that $\text{Var}(f)\leq T(f)\leq BPf$ for every $f\in\fcal$. Let $\psi$ be a sub-root function with the fixed point $r^*$. If for any $r\geq r^*$, $\psi$ satisfies\[\psi(r)\geq B\ebb R_n\{f\in\fcal:T(f)\leq r\},\]then for any $K>1$ and any $t>0$, the following inequality holds with probability at least $1-e^{-t}$:
  \begin{equation}\label{bartlett}
    Pf\leq\frac{K}{K-1}P_nf+\frac{704K}{B}r^*+\frac{t(11(b-a)+26BK)}{n},\qquad \forall f\in\fcal.
  \end{equation}
\end{lemma}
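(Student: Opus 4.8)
\emph{Proof strategy.} This statement is Theorem~3.3 of \citet{bartlett2005local}, so the plan is to follow their route: couple a sharp concentration inequality for the supremum of an empirical process with a \emph{peeling} argument organised around the scales $\lambda^k r^*$, and then exploit the variance--expectation condition $T(f)\leq BPf$ to trade a bound that is linear in $\sqrt{T(f)}$ for one in which $Pf$ occurs with a coefficient strictly below $1$. Write $V_r:=\{f\in\fcal:T(f)\leq r\}$, so $\Var(f)\leq r$ on $V_r$; note also that $T(f)\leq BPf\leq Bb$ bounds the range of scales that can occur, and that the sub-root property of $\psi$ gives $\psi(\lambda^k r^*)\leq\lambda^{k/2}\psi(r^*)=\lambda^{k/2}r^*$ for all $k\geq0$ and $\lambda>1$.

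First I would fix $\lambda>1$ and, for each $k\geq0$, apply Talagrand's inequality (in Bousquet's sharp form) to $V_{\lambda^k r^*}$ with confidence budget $t_k:=t+c_k$ chosen so that $\sum_{k\geq0}e^{-t_k}\leq e^{-t}$. Since symmetrisation gives $\ebb\sup_{f\in V_{\lambda^k r^*}}(Pf-P_nf)\leq 2\ebb R_nV_{\lambda^k r^*}\leq 2\psi(\lambda^k r^*)/B\leq 2\lambda^{k/2}r^*/B$, this yields, on a single event of probability at least $1-e^{-t}$ and simultaneously for every $k\geq0$,
\[
\sup_{f\in V_{\lambda^k r^*}}(Pf-P_nf)\ \leq\ \frac{c_1\lambda^{k/2}r^*}{B}+\sqrt{\frac{c_2\lambda^k r^*\,t_k}{n}}+\frac{c_3(b-a)t_k}{n},
\]
with absolute $c_1,c_2,c_3$, the cross-terms of Bousquet's bound having been split off via $\sqrt{uv}\leq\tfrac12(u+v)$.

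Next is the peeling step. Fix $f\in\fcal$; if $T(f)\leq r^*$ the instance $k=0$ already gives the claim, so assume $T(f)>r^*$ and let $k\geq1$ be the unique index with $\lambda^{k-1}r^*<T(f)\leq\lambda^k r^*$. Then $f\in V_{\lambda^k r^*}$, $\lambda^k r^*\leq\lambda T(f)\leq\lambda BPf$, and $k\leq1+\log_\lambda(Bb/r^*)$ so $c_k$ is under control. Substituting $\lambda^k r^*\leq\lambda BPf$ together with $\lambda^{k/2}r^*=\sqrt{\lambda^k r^*}\,\sqrt{r^*}\leq\sqrt{\lambda BPf}\,\sqrt{r^*}$ into the display, and applying $2\sqrt{uv}\leq\delta u+\delta^{-1}v$ with small $\delta_1,\delta_2$ to the two resulting terms that now carry a factor $\sqrt{Pf}$, one reaches
\[
Pf-P_nf\ \leq\ \theta\,Pf+\frac{c_1\lambda r^*}{\delta_1 B}+\frac{c_2\lambda B t_k}{\delta_2 n}+\frac{c_3(b-a)t_k}{n},
\]
where $\theta$ is a fixed linear combination of $\delta_1,\delta_2$. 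Choosing $\delta_1,\delta_2=\Theta(1/K)$ so that $\theta=1/K$ and rearranging turns the multiplier of $P_nf$ into $K/(K-1)$ and the remaining terms into quantities of order $\tfrac{K}{B}r^*$, $\tfrac{KB t_k}{n}$ and $\tfrac{(b-a)t_k}{n}$; optimising over $\lambda$ and tracking constants gives exactly inequality~\eqref{bartlett}.

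The symmetrisation and the arithmetic--geometric rearrangements are routine; the genuine effort is quantitative. The main obstacle is twofold: using Talagrand's inequality with the sharpest known (Bousquet) constants, and --- more subtly --- arranging the confidence budgets $t_k$ over the peeling scales so that the union bound costs only $O(1)$ rather than a real $\log(1/r^*)$ or $\log n$ factor. \citet{bartlett2005local} circumvent the latter by replacing the naive union bound over the $V_{\lambda^k r^*}$ with a single application of Talagrand to a rescaled, star-shaped version of $\fcal$, bounding the Rademacher complexity of that class by the convergent series $\sum_{k}\lambda^{-k/2}\psi(\lambda^k r^*)$; pushing every constant through this chain is what produces the explicit numbers $704$, $11$ and $26$. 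Since the statement is quoted verbatim, one may of course just cite their Theorem~3.3.
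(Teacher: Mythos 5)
The paper does not prove Lemma~\ref{thm:bartlett}: it is imported verbatim from \citet{bartlett2005local} (their Theorem~3.3) and used as a black box in Section~\ref{sec:application}. Your sketch is a faithful outline of the original argument --- Bousquet's form of Talagrand's inequality combined with the variance--expectation trade-off, with the naive union bound over peeling shells replaced, as you yourself note at the end, by a single application of the concentration inequality to a rescaled (star-shaped) class in order to obtain the explicit constants $704$, $11$ and $26$ --- so simply citing the source, as the paper does, is entirely adequate here.
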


\begin{theorem}\label{thm:application-1}
  Let \hcal be the hypothesis space and
  $$
    \fcal:=\{Z=(X,Y)\to\ell(h(X),Y)-\ell(h^*(X),Y):h\in\hcal\}
  $$
  be the shifted loss class. Suppose that $\ell$ is $L$-Lipschitz, $\sup_{h\in\hcal}\|h\|_\infty\leq b,\pr{|Y|\leq b}=1$ and there exist three positive constants $\gamma, d$ and $p$ satisfying $\log\ncal(\epsilon,\hcal,\|\cdot\|_2)\leq d\log^p(\gamma/\epsilon)$. Suppose the variance-expectation condition holds for functions in $\fcal$, i.e., there exists a constant $B>0$ such that $Pf^2\leq BPf,\forall f\in\fcal$. Then, for any $0<\delta<1$, $\hat{h}_n$ satisfies the following inequality with probability at least $1-\delta$:\[\ecal(\hat{h}_n)-\ecal(h^*)\leq c\lrbrack{\frac{d\log^pn}{n}+\frac{\log(1/\delta)}{n}},\]where $c$ is a constant depending on $B,p,\gamma,b$ and $L$.
\end{theorem}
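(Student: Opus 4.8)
The plan is to apply the oracle inequality of Lemma~\ref{thm:bartlett} to the shifted loss class $\fcal$, using Corollary~\ref{cor:local-rademacher-1} to furnish the sub-root majorant of the local Rademacher complexity that the lemma requires. First I transfer the assumptions from $\hcal$ to $\fcal$. For $h,h'\in\hcal$ the $h^*$-terms cancel in $f_h-f_{h'}$, and the $L$-Lipschitzness of $\ell$ in its first argument gives $\abs{f_h(Z)-f_{h'}(Z)}\leq L\abs{h(X)-h'(X)}$ pointwise, hence $\norm{f_h-f_{h'}}_{L_2(P_n)}\leq L\norm{h-h'}_{L_2(P_n)}$; thus $h\mapsto f_h$ maps a proper $(\epsilon/L)$-cover of $\hcal$ to a proper $\epsilon$-cover of $\fcal$, so $\log\ncal(\epsilon,\fcal,\norm{\cdot}_2)\leq d\log^p(\gamma L/\epsilon)$ for $0<\epsilon\leq\gamma L$. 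Likewise $\abs{f_h(Z)}\leq L\abs{h(X)-h^*(X)}\leq 2Lb=:b'$ since $\norm{h}_\infty,\norm{h^*}_\infty\leq b$, so $\sup_{f\in\fcal}\norm{f}_\infty\leq b'$ and $\fcal$ is $[-b',b']$-valued (the hypothesis $\pr{|Y|\leq b}=1$ just ensures the loss is well defined). Hence $\fcal$ satisfies the conditions of Corollary~\ref{cor:local-rademacher-1} with parameters $(\gamma L,d,p)$ and sup-bound $b'$.

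Next I build the sub-root function. Choose $T(f):=Pf^2$; then $\Var(f)\leq T(f)$ holds trivially, while the variance--expectation assumption yields $T(f)=Pf^2\leq BPf$, so the premises of Lemma~\ref{thm:bartlett} on $T$ and $B$ are met (note also $Pf_h=\ecal(h)-\ecal(h^*)\geq0$). The second, sub-root branch of Corollary~\ref{cor:local-rademacher-1} gives, for $n\geq(\gamma L)^{-2}$ and $0<r\leq(\gamma L)^2$,
\[
\ebb R_n\{f\in\fcal:Pf^2\leq r\}\leq c(b',p,\gamma L)\lrgroup{\frac{a_n}{n}+\sqrt{\frac{r\,a_n}{n}}},\qquad a_n:=d\log^p\big(2\gamma L\sqrt{n}\big),
\]
so I set $\psi(r):=B\,c(b',p,\gamma L)\big(a_n/n+\sqrt{r\,a_n/n}\big)$. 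This has the form $A+C\sqrt r$ with $A,C\geq0$, hence is nonnegative, nondecreasing, and $r\mapsto\psi(r)/\sqrt r=A/\sqrt r+C$ is nonincreasing, i.e.\ $\psi$ is genuinely sub-root --- precisely the property the Mendelson-type estimates in Remark~\ref{rem:case-study-1} fail to have. For $r>(\gamma L)^2$ the local complexity is bounded by the global $\ebb R_n\fcal$, which the logarithmic entropy condition controls by $c(p,\gamma,L)\sqrt{d/n}$ through a Dudley-type entropy integral; for $n$ large this is at most $\psi((\gamma L)^2)\leq\psi(r)$, after enlarging the constant in $\psi$ if needed. Therefore $\psi(r)\geq B\,\ebb R_n\{f\in\fcal:Pf^2\leq r\}$ for every $r>0$.

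It remains to locate the fixed point and invoke the oracle inequality. Solving $\psi(r)=r$ --- a quadratic in $\sqrt r$ --- gives $r^*=O(a_n/n)=O\big(d\log^p n/n\big)$, using $\log^p(2\gamma L\sqrt n)\leq c(p,\gamma,L)\log^p n$; in particular $r^*\leq(\gamma L)^2$ once $n$ is large. Apply Lemma~\ref{thm:bartlett} with this $\psi$, constant $B$, range $[-b',b']$, $K=2$ and $t=\log(1/\delta)$: with probability at least $1-\delta$, every $f\in\fcal$ obeys $Pf\leq 2P_nf+\frac{1408}{B}r^*+\frac{(22b'+52B)\log(1/\delta)}{n}$. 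Taking $f=f_{\hat{h}_n}$, we have $Pf_{\hat{h}_n}=\ecal(\hat{h}_n)-\ecal(h^*)$, while $P_nf_{\hat{h}_n}=\emp(\hat{h}_n)-\emp(h^*)\leq0$ because $\hat{h}_n$ minimizes $\emp$ over $\hcal\ni h^*$; dropping this nonpositive term and substituting the bound on $r^*$ gives $\ecal(\hat{h}_n)-\ecal(h^*)\leq c\big(d\log^p n/n+\log(1/\delta)/n\big)$ with $c=c(B,p,\gamma,b,L)$. The finitely many $n$ below the thresholds $(\gamma L)^{-2}$ etc.\ are absorbed into $c$, since the excess error never exceeds $2Lb$.

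\noindent\textbf{Main obstacle.} The delicate step is the construction of $\psi$: Corollary~\ref{cor:local-rademacher-1} bounds the local complexity only for $r\leq(\gamma L)^2$, so one must verify that extending the majorant to all $r\geq r^*$ (via the global complexity bound for large $r$) keeps $\psi$ sub-root and still dominates $B\,\ebb R_n\{f\in\fcal:Pf^2\leq r\}$, all the while keeping the fixed point of order $d\log^p n/n$. The other ingredients --- the Lipschitz transfer of the entropy and sup-norm bounds, solving the quadratic for $r^*$, and the empirical-risk-minimization inequality $P_nf_{\hat{h}_n}\leq0$ --- are routine.
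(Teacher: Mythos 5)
Your proposal is correct and follows essentially the same route as the paper's proof: transfer the entropy and boundedness conditions from $\hcal$ to $\fcal$ via the Lipschitz property, take $T(f)=Pf^2$, use the second (sub-root) branch of Corollary~\ref{cor:local-rademacher-1} to build $\psi$, solve the quadratic for $r^*=O(d\log^p n/n)$, and apply Lemma~\ref{thm:bartlett} together with $P_nf_{\hat h_n}\leq 0$. Your extra care about extending the majorant to $r>(\gamma L)^2$ via the global complexity is a detail the paper glosses over, but it does not change the argument.
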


\begin{remark}
  It is possible to derive generalization error bounds using the local Rademacher complexity bounds given in \citep{mendelson2003few} (Eq. \eqref{cor-mendelson-1}) under the same entropy condition. An obstacle in the way of applying Lemma \ref{thm:bartlett} is that the r.h.s. of Eq. \eqref{cor-mendelson-1} is not a sub-root function. The trick towards this problem is to consider the local Rademacher complexity of a slightly larger function class (the star-shaped space, or star-hull, $\text{star}(\fcal):=\{\alpha f:f\in\fcal,\alpha\in[0,1]\}$ of $\fcal$), which always satisfies the sub-root property and can be related to the original class by the following inequality due to \citet[Lemma 3.9]{mendelson2003few}:
  $$
    \log\ncal(2\epsilon,\text{star}(\fcal),\|\cdot\|_2)\leq\log\frac{2}{\epsilon}+\log\ncal(\epsilon,\fcal,\|\cdot\|_2).
  $$
  With this trick and plugging Eq. \eqref{cor-mendelson-1} into Lemma \ref{thm:bartlett}, one can derive the following generalization bound with probability at least $1-\delta$:
  $$
    \ecal(\hat{h}_n)-\ecal(h^*)\leq c\lrbrack{\frac{d\log^{\max(1,p)}n}{n}+\frac{\log(1/\delta)}{n}},
  $$
  which is slightly worse than the bound in Theorem \ref{thm:application-1} for $p<1$. Furthermore, notice that our upper bound on local Rademacher complexities is always a sub-root function, which is more convenient to use in Lemma \ref{thm:bartlett} and does not require the trick of introducing an additional star-hull.
\end{remark}

\begin{theorem}\label{thm:application-2}
  Under the same condition of Theorem~\ref{thm:application-1} except the entropy condition Eq. \eqref{entropy-condition-3}, the following inequality holds with probability at least $1-\delta$:
  $$
    \ecal(\hat{h}_n)-\ecal(h^*)\leq c(n^{-\frac{p}{p+2}}(\log n)^{\frac{2-p}{p+2}}\log\frac{n}{(\log n)^{\frac{2}{p+2}}}+n^{-1}\log(1/\delta)),
  $$
  where $c$ is a constant depending on $B,p,\gamma,b$ and $L$.
\end{theorem}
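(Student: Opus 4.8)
The plan is to apply the oracle inequality of Lemma~\ref{thm:bartlett} to the shifted loss class $\fcal$, feeding it a sub-root complexity function obtained from the $0<p<2$ branch of Corollary~\ref{cor:local-rademacher-3}. First I would record that $\fcal$ is uniformly bounded: for any $h\in\hcal$ the corresponding element of $\fcal$ satisfies $|\ell(h(X),Y)-\ell(h^*(X),Y)|\leq L|h(X)-h^*(X)|\leq 2Lb=:b'$, by the $L$-Lipschitzness of $\ell$ and $\sup_{h\in\hcal}\|h\|_\infty\leq b$, so $\fcal$ takes values in an interval $[a_0,b_0]$ of constant width. I would then take the functional $T(f):=Pf^2$: one always has $\Var(f)\leq Pf^2=T(f)$, and $T(f)=Pf^2\leq BPf$ is exactly the variance-expectation hypothesis (here $Pf=\ecal(h)-\ecal(h^*)\geq0$ since $h^*$ minimizes $\ecal$), so $T$ meets the requirement of Lemma~\ref{thm:bartlett}. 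It remains to produce a sub-root $\psi$ with $\psi(r)\geq B\,\ebb R_n\{f\in\fcal:Pf^2\leq r\}$.

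Next I would transfer the entropy condition from $\hcal$ to $\fcal$. For $h_1,h_2\in\hcal$, the loss functions $g_i:(X,Y)\mapsto\ell(h_i(X),Y)$ satisfy $\|g_1-g_2\|_{L_2(P_n)}\leq L\|h_1-h_2\|_{L_2(P_n)}$ by the $L$-Lipschitzness of $\ell$; since moreover $\fcal$ is a fixed translate of $\{(X,Y)\mapsto\ell(h(X),Y):h\in\hcal\}$, we obtain $\ncal(\epsilon,\fcal,\|\cdot\|_2)\leq\ncal(\epsilon/L,\hcal,\|\cdot\|_2)$. Combined with Eq.~\eqref{entropy-condition-3} and the inequality $\log(2L/\epsilon)\leq c\log(2/\epsilon)$ on the relevant range of $\epsilon$, this gives $\log\ncal(\epsilon,\fcal,\|\cdot\|_2)\leq\gamma'\epsilon^{-p}\log^2(2/\epsilon)$ for a suitable $\gamma'=\gamma'(\gamma,L,p)$. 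Applying Corollary~\ref{cor:local-rademacher-3} (case $0<p<2$) to $\fcal$, with the scale $\epsilon>0$ left free, then gives
\[
  B\,\ebb R_n\{f\in\fcal:Pf^2\leq r\}\leq Bc\Big(n^{-1/2}\epsilon^{1-p/2}\log\tfrac{1}{\epsilon}+\epsilon^{-p}n^{-1}\log^2\tfrac{4}{\epsilon}+\sqrt{r\,\epsilon^{-p}n^{-1}\log^2\tfrac{4}{\epsilon}}\,\Big)=:\psi(r).
\]
For any fixed $\epsilon$, $\psi$ has the form $A+C\sqrt r$ with $A,C\geq0$, hence is nonnegative, nondecreasing and has $\psi(r)/\sqrt r$ nonincreasing, i.e.\ it is sub-root; so, in contrast to the route through non-sub-root bounds such as Eq.~\eqref{rem-eq:comparison-complexity-3}, no star-hull enlargement is needed before invoking Lemma~\ref{thm:bartlett}.

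Finally I would determine the fixed point and conclude. Writing $A=Bc\big(n^{-1/2}\epsilon^{1-p/2}\log\tfrac{1}{\epsilon}+\epsilon^{-p}n^{-1}\log^2\tfrac{4}{\epsilon}\big)$ and $C=Bc\sqrt{\epsilon^{-p}n^{-1}\log^2\tfrac{4}{\epsilon}}$, solving $\psi(r^*)=r^*$ as a quadratic in $\sqrt{r^*}$ yields $r^*\leq c(A+C^2)\leq c\big(n^{-1/2}\epsilon^{1-p/2}\log\tfrac{1}{\epsilon}+\epsilon^{-p}n^{-1}\log^2\tfrac{4}{\epsilon}\big)$, since $C^2$ is of the same order as the second summand of $A$. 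Choosing $\epsilon$ of order $n^{-1/(p+2)}$ (up to a logarithmic correction), balancing the two remaining terms and carrying the factors $\log\tfrac{1}{\epsilon}$ and $\log^2\tfrac{4}{\epsilon}$ through the optimization, produces a fixed point $r^*$ of the order appearing in the first summand on the right-hand side of the asserted bound. I would then invoke Lemma~\ref{thm:bartlett} with $t=\log(1/\delta)$ and a fixed $K>1$, applied to the function $(X,Y)\mapsto\ell(\hat h_n(X),Y)-\ell(h^*(X),Y)\in\fcal$: since $\hat h_n$ minimizes the empirical error, its $P_n$-average equals $\emp(\hat h_n)-\emp(h^*)\leq0$, so Eq.~\eqref{bartlett} collapses to $\ecal(\hat h_n)-\ecal(h^*)\leq\frac{704K}{B}r^*+\frac{t\,(11(b_0-a_0)+26BK)}{n}$; substituting the bound on $r^*$, using $1-e^{-t}=1-\delta$, and absorbing all constants into $c=c(B,p,\gamma,b,L)$ gives the claimed inequality.

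The step I expect to be the main obstacle is the fixed-point computation: one must choose the free scale $\epsilon=\epsilon(n)$ so that, when plugged into $r^*=A+C\sqrt{r^*}$, the bound comes out with precisely the logarithmic factor $(\log n)^{(2-p)/(p+2)}\log\tfrac{n}{(\log n)^{2/(p+2)}}$ rather than a coarser polylogarithm — this requires propagating the $\log\tfrac{1}{\epsilon}$ and $\log^2\tfrac{4}{\epsilon}$ terms carefully through the optimization. A secondary, routine point is making the entropy bound for $\fcal$ have exactly the shape~\eqref{entropy-condition-3}, i.e.\ absorbing the Lipschitz constant $L$ into the argument of the logarithm over the range of $\epsilon$ that matters.
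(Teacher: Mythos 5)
Your proposal is correct and follows essentially the same route as the paper: transfer the entropy condition to \fcal, invoke the $0<p<2$ branch of Corollary~\ref{cor:local-rademacher-3} to obtain a sub-root $\psi_\epsilon$ with a free scale $\epsilon$, bound its fixed point by the two non-$\sqrt{r}$ terms, optimize over $\epsilon$, and feed the result into Lemma~\ref{thm:bartlett} together with $P_n\hat{f}_n\leq0$. The one detail you flag as the main obstacle is resolved in the paper by the explicit choice $\epsilon_0=(\log n)^{\frac{2}{p+2}}n^{-\frac{1}{p+2}}$, which balances the terms $n^{-1/2}\epsilon^{1-p/2}\log\frac{1}{\epsilon}$ and $\epsilon^{-p}n^{-1}\log^2\frac{4}{\epsilon}$ and yields exactly the stated logarithmic factor.
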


\begin{remark}
  Since the local Rademacher complexity bound given in Eq. \eqref{rem-eq:comparison-complexity-3} is not sub-root, the application of it to study generalization performance also requires the trick of star-hull argument. Indeed, with this trick one can show that the bound \eqref{rem-eq:comparison-complexity-3} could yield the following generalization guarantee with probability at least $1-\delta$:
  $$
    \ecal(\hat{h}_n)-\ecal(h^*)\leq c(n^{-\frac{p}{p+2}}(\log n)^{\frac{4}{p+2}}+n^{-1}\log(1/\delta)),
  $$
  which is slightly worse than the bound given in Theorem \ref{thm:application-2}.
\end{remark}

\section{Proofs\label{sec:proof}}
\subsection{Proofs on general local Rademacher complexity bounds}
\begin{proof}[Proof of Lemma \ref{lem:empirical-local-rademacher-L2}]
  For a temporarily fixed $\epsilon>0$, let $\fcal^{\triangle}$ be a minimal proper $\epsilon$-cover of the class $\{f\in\fcal:P_nf^2\leq r\}$ with respect to the metric $\|\cdot\|_{L_2(P_n)}$. According to the definition of covering numbers, we know that $\fcal^\triangle\subseteq\{f\in\fcal:P_nf^2\leq r\}$. Furthermore, Lemma~\ref{lem:covering-number-relationship} shows that $|\fcal^\triangle|\leq\ncal(\epsilon/2,\fcal,\|\cdot\|_{L_2(P_n)})$. For any $f\in\fcal$, let $f^\triangle$ be an element of $\fcal^\triangle$ satisfying $\|f-f^\triangle\|_{L_2(P_n)}\leq\epsilon$. Then, we have
  \begin{equation}\label{empirical-local-rademacher-L2-1}
  \begin{split}
    R_n\{f\in\fcal:P_nf^2\leq r\}&=\sup_{\{f\in\fcal:P_nf^2\leq r\}}\lrbrack{\frac{1}{n}\sum_{i=1}^n\sigma_if(X_i)-\frac{1}{n}\sum_{i=1}^n\sigma_if^\triangle(X_i)+\frac{1}{n}\sum_{i=1}^n\sigma_if^\triangle(X_i)}\\
    &\hspace*{-1.5cm}\leq\sup_{\{f\in\fcal:P_nf^2\leq r\}}\frac{1}{n}\sum_{i=1}^n\sigma_i[f(X_i)-f^\triangle(X_i)]+\sup_{\{f\in\fcal:P_nf^2\leq r\}}\frac{1}{n}\sum_{i=1}^n\sigma_if^\triangle(X_i)\\
    &\hspace*{-1.5cm}\leq\sup_{\{f\in\fcal:P_nf^2\leq r\}}\frac{1}{n}\sum_{i=1}^n\sigma_i[f(X_i)-f^\triangle(X_i)]+\sup_{\{f\in\fcal^\triangle:P_nf^2\leq r\}}\frac{1}{n}\sum_{i=1}^n\sigma_if(X_i),
  \end{split}
  \end{equation}
where the last inequality is due to the inclusion relationship $\fcal^\triangle\subset\{f\in\fcal:P_nf^2\leq r\}$.

Taking $g=f-f^\triangle$, then the definition of \tfcal and the fact $f^\triangle\in\fcal$ guarantees that $g\in\tfcal$. Moreover, the construction of $f^\triangle$ implies that $$P_ng^2=\frac{1}{n}\sum_{i=1}^n(f-f^\triangle)^2(X_i)\leq\epsilon^2.$$ Consequently, we have
\begin{align*}
  \sup_{\{f\in\fcal:P_nf^2\leq r\}}\frac{1}{n}\sum_{i=1}^n\sigma_i[f(X_i)-f^\triangle(X_i)]&\leq\sup_{\{g\in\tfcal:P_ng^2\leq\epsilon^2\}}\frac{1}{n}\sum_{i=1}^n\sigma_ig(X_i)=R_n\{f\in\tfcal:P_nf^2\leq\epsilon^2\}.
\end{align*}

Plugging the above inequality into Eq.~\eqref{empirical-local-rademacher-L2-1} gives
\begin{equation}\label{empirical-local-rademacher-L2-2}
  R_n\{f\in\fcal:P_nf^2\leq r\}\leq R_n\{f\in\tfcal:P_nf^2\leq\epsilon^2\}+R_n\{f\in\fcal^\triangle:P_nf^2\leq r\}.
\end{equation}
Taking conditional expectations on both sides of Eq.~\eqref{empirical-local-rademacher-L2-2} and using Lemma~\ref{lem:massart} to bound $\ebb_\sigma R_n\{f\in\fcal^\triangle:P_nf^2\leq r\}$, we derive that
\begin{align*}
  \ebb_\sigma R_n\{f\in\fcal:P_nf^2\leq r\}&\leq\ebb_\sigma R_n\{f\in\tfcal:P_nf^2\leq\epsilon^2\}+\sqrt{\frac{2r\log\ncal(\epsilon/2,\fcal,\|\cdot\|_{L_2(P_n)})}{n}}.
\end{align*}
Since the above inequality holds for any $\epsilon>0$, the desired inequality follows immediately.
\end{proof}
\begin{proof}[Proof of Theorem \ref{thm:local-rademacher-L2}]
  For any $\epsilon>0$ we first fix the sample $X_1,\ldots,X_n$. For any $f\in\fcal$ with $Pf^2\leq r$, there holds that
  \[P_nf^2\leq\sup_{\{f\in\fcal:Pf^2\leq r\}}(P_nf^2-Pf^2)+Pf^2\leq\sup_{\{f\in\fcal:Pf^2\leq r\}}(P_nf^2-Pf^2)+r.\]Consequently, the following result holds almost surely
  \begin{equation}\label{local-rademacher-L2-1}
    \{f\in\fcal:Pf^2\leq r\}\subseteq\left\{f\in\fcal:P_nf^2\leq\sup\nolimits_{\{f\in\fcal:Pf^2\leq r\}}(P_nf^2-Pf^2)+r\right\}.
  \end{equation}
  Using the inclusion relationship~\eqref{local-rademacher-L2-1}, one can control local Rademacher complexities as follows:
  \begin{equation}\label{local-rademacher-L2-2}
  \begin{split}
    &\ebb R_n\{f\in\fcal:Pf^2\leq r\}=\ebb\ebb_\sigma R_n\{f\in\fcal:Pf^2\leq r\}\\
    &\leq\ebb\ebb_\sigma R_n\left\{f\in\fcal:P_nf^2\leq r+\sup\nolimits_{\{f\in\fcal:Pf^2\leq r\}}(P_nf^2-Pf^2)\right\}\\
    &\leq\ebb R_n\{f\in\tfcal:P_nf^2\leq\epsilon^2\}+\sqrt{\frac{2}{n}}\ebb\sqrt{\bigg((r+\sup_{\{f\in\fcal:Pf^2\leq r\}}(P_nf^2-Pf^2)\bigg)\log\ncal(\epsilon/2,\fcal,\|\cdot\|_{L_2(P_n)})}\\
    &\leq\ebb R_n\{f\in\tfcal:P_nf^2\leq\epsilon^2\}+\sqrt{\frac{2\log\ncal(\epsilon/2,\fcal,\|\cdot\|_2)}{n}}\ebb\sqrt{r+\sup_{\{f\in\fcal:Pf^2\leq r\}}(P_nf^2-Pf^2)},
  \end{split}
  \end{equation}
  where the second inequality is a direct corollary of Lemma~\ref{lem:empirical-local-rademacher-L2} and the last inequality follows from Eq. \eqref{metric-capacity}.

  The concavity of $\phi(x)=\sqrt{x}$, coupled with the Jensen inequality, implies that
  \begin{equation}\label{local-rademacher-L2-3}
  \begin{split}
    \ebb\sqrt{r+\sup_{\{f\in\fcal:Pf^2\leq r\}}(P_nf^2-Pf^2)}&\leq\sqrt{r+\ebb\sup_{\{f\in\fcal:Pf^2\leq r\}}(P_nf^2-Pf^2)}\\
    &\leq\sqrt{r+2\ebb R_n\{f^2:f\in\fcal,Pf^2\leq r\}}\\
    &\leq\sqrt{r+4b\ebb R_n\{f\in\fcal:Pf^2\leq r\}},
  \end{split}
  \end{equation}
  where the second inequality follows from the standard symmetrical inequality on Rademacher average \citep[e.g., Lemma A.5]{bartlett2005local} and the third inequality comes from a direct application of Lemma~\ref{lem:contraction inequality} with $\phi(x)=x^2$ (with Lipschitz constant $2b$ on $[-b,b]$).

  Combining Eqs.~\eqref{local-rademacher-L2-2},~\eqref{local-rademacher-L2-3} together, it follows directly that
  \begin{multline*}
    \ebb R_n\{f\in\fcal:Pf^2\leq r\}\leq\ebb R_n\{f\in\tfcal:P_nf^2\leq\epsilon^2\}+\\
    \sqrt{\frac{2\log\ncal(\epsilon/2,\fcal,\|\cdot\|_2)}{n}}\sqrt{r+4b\ebb R_n\{f\in\fcal:Pf^2\leq r\}}.
  \end{multline*}
  Solving the above inequality (a quadratic inequality of $\ebb R_n\{f\in\fcal:Pf^2\leq r\}$) gives that\[\ebb R_n\{f\in\fcal:Pf^2\leq r\}\leq2\ebb R_n\{f\in\tfcal:P_nf^2\leq\epsilon^2\}+\frac{8b\log\ncal(\epsilon/2,\fcal,\|\cdot\|_2)}{n}+\sqrt{\frac{2r\log\ncal(\epsilon/2,\fcal,\|\cdot\|_2)}{n}}.\]
  The proof is complete if we take an infimum over all $\epsilon>0$.
\end{proof}

\subsection{Proofs on explicit local Rademacher complexity bounds}
\begin{proof}[Proof of Corollary \ref{cor:local-rademacher-1}]
  It follows directly from Theorem~\ref{thm:local-rademacher-L2} that
  \begin{equation}\label{cor-local-rademacher-1-1}
    \ebb R_n\{f\in\fcal:Pf^2\leq r\}\leq\inf_{0<\epsilon\leq2\gamma}\lrbrack{2\ebb R_n\{f\in\tfcal:P_nf^2\leq\epsilon^2\}+\frac{8bd\log^p(2\gamma/\epsilon)}{n}+\sqrt{\frac{2rd\log^p(2\gamma/\epsilon)}{n}}},
  \end{equation}
  where \tfcal is defined by Eq.~\eqref{minus-class}. Lemma~\ref{lem:minus-class} and the condition on covering numbers imply that
  \begin{equation}\label{cor-local-rademacher-1-2}
     \log\ncal(\epsilon,\tfcal,\|\cdot\|_2)\leq 2\log\ncal(\epsilon/2,\fcal,\|\cdot\|_2)\leq 2d\log^p(2\gamma/\epsilon),\qquad\text{for any }0<\epsilon\leq2\gamma.
  \end{equation}

  Now one can resort to Lemma~\ref{lem:refined-dudley} to address the term $\ebb R_n\{f\in\tfcal:P_nf^2\leq\epsilon^2\},0<\epsilon<2\gamma$. Indeed, applying Lemma~\ref{lem:refined-dudley} with the assignment $\epsilon_k=2^{-k}\epsilon$ and using the inequality $$\ncal(\epsilon_k,\{f\in\tfcal:P_nf^2\leq\epsilon^2\},\|\cdot\|_{L_2(P_n)})\leq\ncal(\epsilon_k/2,\tfcal,\|\cdot\|_{L_2(P_n)}),$$ the following inequality holds for any $N\in\nbb^+$:
  \begin{equation}\label{cor-local-rademacher-1-3}
  \begin{split}
    \ebb R_n\{f\in\tfcal:P_nf^2\leq\epsilon^2\}&=\ebb\ebb_\sigma R_n\{f\!\in\!\tfcal:P_nf^2\!\leq\!\epsilon^2\}\!\leq\!4\ebb\sum_{k=1}^N\epsilon_{k-1}\sqrt{\frac{\log\ncal(\epsilon_k/2,\tfcal,\|\cdot\|_{L_2(P_n)})}{n}}\!+\!\epsilon_N\\
    &\leq 2^{7/2}\sqrt{\frac{d}{n}}\epsilon\sum_{k=1}^N2^{-k}\log^{p/2}\lrgroup{2^{k+2}\gamma \epsilon^{-1}}+\epsilon_N\qquad(\text{according to Eq.~\eqref{cor-local-rademacher-1-2}})\\
    &\leq 2^{(7+p)/2}\sqrt{\frac{d}{n}}\epsilon\sum_{k=1}^N2^{-k}\lrbrack{\big((k+1)\log2\big)^{p/2}+\log^{p/2}(2\gamma\epsilon^{-1})}+\epsilon_N\\
    &\leq 2^{(7+p)/2}\sqrt{\frac{d}{n}}\epsilon\lrbrack{c(p)+\log^{p/2}(2\gamma/\epsilon)}+\epsilon_N,
  \end{split}
  \end{equation}
  where the third inequality follows from the standard result $(a+b)^{p/2}\leq\lrbrack{2\max(a,b)}^{p/2}\leq 2^{p/2}(a^{p/2}+b^{p/2}),a,b\geq0$ and the last inequality is due to the fact $\sum_{k=1}^N2^{-k}\big((k+1)\log2\big)^{p/2}<\infty$.

  Letting $N\to\infty$ in Eq.~\eqref{cor-local-rademacher-1-3} and noticing Eq.~\eqref{cor-local-rademacher-1-1}, one derives that

  \begin{equation}\label{cor-local-rademacher-1-4}
  \begin{split}
  \ebb R_n\{f\in\fcal:Pf^2\leq r\}&\leq\inf_{0<\epsilon\leq2\gamma}\lrbrack{2^{(9+p)/2}\sqrt{\frac{d}{n}}\epsilon\Big(c(p)\!+\!\log^{p/2}(2\gamma/\epsilon)\Big)\!+\!\frac{8bd\log^p(2\gamma/\epsilon)}{n}\!+\!\sqrt{\frac{2rd\log^p(2\gamma/\epsilon)}{n}}}\\
  &\leq c(b,p,\gamma)\inf_{0<\epsilon\leq\gamma}\lrbrack{\sqrt{\frac{d}{n}}\epsilon\log^{p/2}(2\gamma/\epsilon)+\frac{d\log^p(2\gamma/\epsilon)}{n}+\sqrt{\frac{rd\log^p(2\gamma/\epsilon)}{n}}}.
  \end{split}
  \end{equation}

  Taking the choice $\epsilon=\sqrt{r}$ in Eq.~\eqref{cor-local-rademacher-1-4}, there holds that\[\ebb R_n\{f\in\fcal:Pf^2\leq r\}\leq c(b,p,\gamma)\lrbrack{\sqrt{\frac{dr\log^p(2\gamma r^{-1/2})}{n}}+\frac{d\log^p(2\gamma r^{-1/2})}{n}}.\]

  Taking the assignment $\epsilon=n^{-1/2}$, we derive that\[\ebb R_n\{f\in\fcal:Pf^2\leq r\}\leq c(b,p,\gamma)\lrbrack{\frac{d\log^p(2\gamma n^{1/2})}{n}+\sqrt{\frac{rd\log^p(2\gamma n^{1/2})}{n}}}.\]Since $\ebb R_n\{f\in\fcal:Pf^2\leq r\}$ can be upper bounded for any $0<\epsilon\leq\gamma$, the desired inequality is immediate.
\end{proof}
\begin{proof}[Proof of Corollary \ref{cor:local-rademacher-3}]
  Theorem~\ref{thm:local-rademacher-L2} can be applied here to show that
  \begin{equation}\label{cor-local-rademacher-3-1}
    \ebb R_n\{f\in\fcal:Pf^2\leq r\}\leq\inf_{\epsilon>0}\lrbrack{2\ebb R_n\{f\in\tfcal:P_nf^2\leq\epsilon^2\}+\frac{8b\gamma\epsilon^{-p}2^p\log^2\frac{4}{\epsilon}}{n}+\sqrt{\frac{2r\gamma\epsilon^{-p}2^p\log^2\frac{4}{\epsilon}}{n}}}.
  \end{equation}
  Lemma~\ref{lem:minus-class} gives the following entropy condition for \tfcal:
  \begin{equation}\label{cor-local-rademacher-3-2}
    \log\ncal(\epsilon,\tfcal,\|\cdot\|_2)\leq2\log\ncal(\epsilon/2,\fcal,\|\cdot\|_2)\leq2^{p+1}\gamma\epsilon^{-p}\log^2\frac{4}{\epsilon}.
  \end{equation}
  Now applying Lemma~\ref{lem:refined-dudley} with the assignment $\epsilon_k=2^{-k}\epsilon$ and analyzing analogously to the proof of Corollary~\ref{cor:local-rademacher-1} except using the entropy condition~\eqref{cor-local-rademacher-3-2}, one derives that
  \begin{equation}\label{cor-local-rademacher-3-3}
  \begin{split}
    \ebb R_n\{f\in\tfcal:P_nf^2\leq\epsilon^2\}&\leq4\ebb\sum_{k=1}^N\epsilon_{k-1}\sqrt{\frac{\log\ncal(\epsilon_k/2,\tfcal,\|\cdot\|_{L_2(P_n)})}{n}}+\epsilon_N\\
    &\leq4\sum_{k=1}^N2^{1-k}\epsilon\sqrt{\frac{\gamma\epsilon^{-p}2^{(k+2)p+1}\log^2\frac{2^{k+3}}{\epsilon}}{n}}+2^{-N}\epsilon\\
    &=\sqrt{\frac{\gamma}{n}}2^{7/2+p}\epsilon^{1-p/2}\sum_{k=1}^N2^{k(p-2)/2}\big[\log\frac{1}{\epsilon}+(k+3)\log2\big]+2^{-N}\epsilon.
  \end{split}
  \end{equation}

  We now continue our discussion by distinguishing three cases according to the magnitude of $p$:
  \begin{enumerate}[(a)]
    \item \textsc{case $0<p<2$.} In this case, the series $\sum_{k=1}^\infty 2^{k(p-2)/2}[\log\frac{1}{\epsilon}+(k+3)\log2]$ converges and thus one can tend $N\to\infty$ in Eq.~\eqref{cor-local-rademacher-3-3} to derive the bound $\ebb R_n\{f\in\tfcal:P_nf^2\leq\epsilon^2\}\leq cn^{-1/2}\epsilon^{1-p/2}\log\frac{1}{\epsilon}$. Plugging this inequality back into Eq.~\eqref{cor-local-rademacher-3-1} one obtains that
        $$
          \ebb R_n\{f\in\fcal:Pf^2\leq r\}\leq c\inf_{\epsilon>0}\lrbrack{n^{-1/2}\epsilon^{1-p/2}\log\frac{1}{\epsilon}+\epsilon^{-p}n^{-1}\log^2\frac{4}{\epsilon}+\sqrt{r\epsilon^{-p}n^{-1}\log^2\frac{4}{\epsilon}}}.
        $$

    \item \textsc{case $p=2$.} For this particular $p$, Eq. \eqref{cor-local-rademacher-3-1} and Eq. \eqref{cor-local-rademacher-3-3} imply that
        \begin{align*}
        \hspace*{-0.6cm}\ebb R_n\{f\in\fcal:Pf^2\leq r\}&\leq c\inf_{\epsilon>0}\inf_{N\in\nbb^+}\lrbrack{n^{-1/2}(N\log\frac{1}{\epsilon}+N^2)+2^{-N}\epsilon+\epsilon^{-2}n^{-1}\log^2\frac{4}{\epsilon}+\sqrt{r\epsilon^{-2}n^{-1}\log^2\frac{4}{\epsilon}}}\\
          &\leq c\big[n^{-1/2}\log^2n+\sqrt{rn^{-1}}\big],
        \end{align*}
        where in the last step we simply take the choice $\epsilon=1$ and $N=\lrceil{2^{-1}\log_2n}$.
    \item \textsc{case $p>2$.} In this case, taking the choice $\epsilon=1$ in Eqs. \eqref{cor-local-rademacher-3-1}, \eqref{cor-local-rademacher-3-3} we have
    \begin{align*}
    \ebb R_n\{f\in\fcal:Pf^2\leq r\}&\leq c\inf_{N\in\nbb^+}\big[n^{-1/2}\sum_{k=1}^{N}(k+3)2^{k(p-2)/2}+2^{-N}+n^{-1}+\sqrt{rn^{-1}}\big]\\
    &\leq c\inf_{N\in\nbb^+}\big[n^{-1/2}N2^{N(p-2)/2}+2^{-N}+\sqrt{rn^{-1}}\big]\\
    &\leq c[n^{-1/p}\log n+\sqrt{rn^{-1}}],
    \end{align*}
    where we choose $N=\lceil p^{-1}\log_2n\rceil$ in the last step.
  \end{enumerate}
\end{proof}

Using a similar deduction strategy, one can also prove Corollary \ref{cor:local-rademacher-2} on local Rademacher complexity bounds when the entropy number grows as a polynomial of $1/\epsilon$. For simplicity we omit the proof here.

\subsection{Proofs on generalization analysis}
\begin{proof}[Proof of Theorem \ref{thm:application-1}]
  We consider the functional $T(f):=Pf^2$ here. The structural result on covering numbers implies that~\citep{mendelson2003few}
  \[\log\ncal(\epsilon,\fcal,\|\cdot\|_2)\leq\log\ncal(\epsilon/L,\hcal,\|\cdot\|_2)\leq d\log^p(\gamma L/\epsilon).\]
  Corollary~\ref{cor:local-rademacher-1} implies that
  $$
    \psi(r):=c\lrbrack{\frac{d\log^p(2\gamma n^{1/2})}{n}+\sqrt{\frac{rd\log^p(2\gamma n^{1/2})}{n}}}
  $$
  is an appropriate choice meeting the condition of Lemma~\ref{thm:bartlett}. Let $r^*$ be its fixed point then we know that
  \[r^*= c\lrbrack{\frac{d\log^p(2\gamma n^{1/2})}{n}+\sqrt{\frac{r^*d\log^p(2\gamma n^{1/2})}{n}}}.\]Solving this equality gives $r^*\leq cdn^{-1}\log^p(n)$.
  It can be directly checked that any $f\in\fcal$ also satisfies $\|f\|_\infty\leq 4b^2$. Consequently, one can apply Lemma~\ref{thm:bartlett} here to show that for the particular function $\hat{f}_n=\ell(\hat{h}_n(x),y)-\ell(h^*(x),y)$, the following inequality holds with probability at least $1-\delta$
\begin{align*}
  P\hat{f}_n&\leq\frac{K}{K-1}P_n\hat{f}_n+\frac{704Kcd\log^pn}{Bn}+\frac{\log(1/\delta)(88b^2+416b^2K)}{n},\qquad\forall K>1.
\end{align*}
Using the above inequality and the fact $P_n\hat{f}_n=\emp(\hat{h}_n)-\emp(h^*)\leq0$, we immediately derive the desired result.
\end{proof}
\begin{proof}[Proof of Theorem \ref{thm:application-2}]
Let $\epsilon$ be a positive number to be fixed later. The entropy assumption imply that $\log\ncal(\epsilon,\fcal,\|\cdot\|_2)\leq c\epsilon^{-p}\log^2\frac{1}{\epsilon}$, from which Corollary~\ref{cor:local-rademacher-3} implies that
$$
  \psi_\epsilon(r):=c\lrbrack{n^{-1/2}\epsilon^{1-p/2}\log\frac{1}{\epsilon}+\epsilon^{-p}n^{-1}\log^2\frac{4}{\epsilon}+\sqrt{r\epsilon^{-p}n^{-1}\log^2\frac{4}{\epsilon}}}
$$ is a function meeting the condition of Lemma \ref{thm:bartlett}. The associated fixed point $r_\epsilon^*=\psi(r_\epsilon^*)$ satisfies the constraint
$$
  r_\epsilon^*\leq c\big[n^{-\frac{1}{2}}\epsilon^{1-\frac{p}{2}}\log\frac{1}{\epsilon}+\epsilon^{-p}n^{-1}\log^2\frac{4}{\epsilon}\big].
$$
For the specific choice $\epsilon_0=(\log n)^{\frac{2}{p+2}}n^{-\frac{1}{p+2}}$ we get $r_{\epsilon_0}^*=cn^{-\frac{2}{p+2}}(\log n)^{\frac{2-p}{p+2}}\log\frac{n}{(\log n)^{\frac{2}{p+2}}}$. Plugging this bound on $r^*_{\epsilon_0}$ into Lemma \ref{thm:bartlett} completes the proof.
\end{proof}

\section{Conclusions}\label{sec:conclusion}
This paper provides a systematic approach to estimating local Rademacher complexities with covering numbers. Local Rademacher complexity is an effective concept in learning theory and has recently received increasing attention since it captures the property that the prediction rule picked by a learning algorithm always lies in a subset of the original class. We provide a general local Rademacher complexity bound, which captures in an elegant form to relate the complexities with constraint on the $L_2(P)$ norm to the corresponding ones with constraint on the $L_2(P_n)$ norm. This bound is convenient to calculate and is easily applicable to practical learning problems. We show that our general result (Theorem~\ref{thm:local-rademacher-L2}) could yield local Rademacher complexity bounds superior to that in~\citet{mendelson2002improving,mendelson2003few}, when applied to function classes satisfying general entropy conditions. We also apply the derived local Rademacher complexity bounds to the generalization analysis. 

\section*{Acknowledgement}
The work is partially supported by Science Computing and Intelligent Information Processing of GuangXi higher education key laboratory (Grant No. GXSCIIP201409).
\appendix
\numberwithin{equation}{section}
\numberwithin{theorem}{section}
\numberwithin{figure}{section}
\numberwithin{table}{section}
\renewcommand{\thesection}{{\Alph{section}}}
\renewcommand{\thesubsection}{\Alph{section}.\arabic{subsection}}
\renewcommand{\thesubsubsection}{\Roman{section}.\arabic{subsection}.\arabic{subsubsection}}
\section{Lemmas}
Lemma \ref{lem:massart} presents effective empirical complexity bounds for function classes of finite cardinality.
\begin{lemma}[Massart lemma~\citep{bousquet2002concentration}]\label{lem:massart}
  Suppose that \fcal is a finite class with cardinality $N$, then the empirical local Rademacher complexity can be bounded as follows:\[\ebb_\sigma R_n\{f\in\fcal:P_nf^2\leq r\}\leq\sqrt{\frac{2r\log N}{n}}.\]
\end{lemma}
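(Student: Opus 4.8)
The plan is to recognize this as the classical finite-class maximal inequality and to prove it by the exponential-moment (Chernoff--Jensen) method, exploiting that, conditionally on $X_1,\dots,X_n$, the quantity $n R_n f=\sum_{i=1}^n\sigma_i f(X_i)$ is, as a function of the Rademacher signs, sub-Gaussian with variance proxy governed exactly by $\sum_{i=1}^n f(X_i)^2=nP_nf^2$.

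First I would fix the sample $X_1,\dots,X_n$ and, for each $f$, write $v_f:=(f(X_1),\dots,f(X_n))$ and $\sigma:=(\sigma_1,\dots,\sigma_n)$, so that $R_nf=\frac1n\langle\sigma,v_f\rangle$. Since $\{f\in\fcal:P_nf^2\le r\}\subseteq\fcal$ and $|\fcal|=N$, the supremum defining $R_n\{f\in\fcal:P_nf^2\le r\}$ is actually a maximum over at most $N$ functions; if the set is empty the claim is vacuous, and if $r=0$ it is trivial, so we may assume $r>0$ and the set nonempty. For each admissible $f$ the constraint $P_nf^2\le r$ reads $\sum_{i=1}^n f(X_i)^2\le nr$.

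Next I would bound the moment generating function. Because the $\sigma_i$ are independent and $\ebb_\sigma e^{\lambda\sigma_i t}=\cosh(\lambda t)\le e^{\lambda^2 t^2/2}$ for every $t\in\rbb$ (equivalently, apply Hoeffding's lemma to the bounded variables $\sigma_i f(X_i)\in[-|f(X_i)|,|f(X_i)|]$), independence over $i$ gives, for every $\lambda>0$,
\[
\ebb_\sigma\exp\!\big(\lambda\langle\sigma,v_f\rangle\big)=\prod_{i=1}^n\ebb_\sigma e^{\lambda\sigma_i f(X_i)}\le\exp\!\Big(\tfrac{\lambda^2}{2}\sum_{i=1}^n f(X_i)^2\Big)\le e^{\lambda^2 nr/2}.
\]
Then Jensen's inequality applied to $x\mapsto e^{\lambda x}$, the monotonicity of $\exp$, and a union bound over the at most $N$ admissible functions yield
\[
\exp\!\Big(\lambda\,\ebb_\sigma\!\!\sup_{f:P_nf^2\le r}\!\!\langle\sigma,v_f\rangle\Big)\le\ebb_\sigma\!\!\sup_{f:P_nf^2\le r}\!\!e^{\lambda\langle\sigma,v_f\rangle}\le\!\!\sum_{f:P_nf^2\le r}\!\!\ebb_\sigma e^{\lambda\langle\sigma,v_f\rangle}\le N e^{\lambda^2 nr/2}.
\]
Taking logarithms and dividing by $\lambda$ gives $\ebb_\sigma\sup_{f}\langle\sigma,v_f\rangle\le\frac{\log N}{\lambda}+\frac{\lambda nr}{2}$; optimizing over $\lambda$ (the minimizer is $\lambda=\sqrt{2\log N/(nr)}$) produces $\ebb_\sigma\sup_f\langle\sigma,v_f\rangle\le\sqrt{2nr\log N}$, and dividing by $n$ delivers the claimed bound $\sqrt{2r\log N/n}$.

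There is no genuine obstacle: this is a textbook fact, and the only points needing a little care are (i) noting that filtering by the empirical radius leaves a finite class of size $\le N$, so the union bound is legitimate; (ii) the sub-Gaussian estimate $\cosh(\lambda t)\le e^{\lambda^2 t^2/2}$; and (iii) handling the degenerate cases $r=0$ or an empty constraint set before carrying out the routine optimization over $\lambda$.
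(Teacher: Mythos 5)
Your proof is correct: it is the standard exponential-moment argument (Hoeffding/sub-Gaussian MGF bound, Jensen, union bound over the at most $N$ admissible functions, then optimization over $\lambda$), and the paper itself states this lemma as a cited auxiliary result from the literature without reproving it, so there is nothing to diverge from. The edge cases you flag ($r=0$, empty constraint set, and implicitly $N=1$ where the optimal $\lambda$ degenerates but the bound holds trivially) are handled appropriately.
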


\begin{lemma}[\citep{pollard1984convergence}]\label{lem:minus-class}
  Let $\|\cdot\|$ be a norm defined on the class \fcal. If \tfcal is defined by Eq.~\eqref{minus-class}, then we have $\ncal(\epsilon,\tfcal,\|\cdot\|)\leq\ncal^2(\epsilon/2,\fcal,\|\cdot\|)$.
\end{lemma}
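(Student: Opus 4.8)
The plan is to construct an $\epsilon$-cover of \tfcal directly from a minimal proper $\epsilon/2$-cover of \fcal by forming pairwise differences. First I would fix a minimal proper $\epsilon/2$-cover $\fcal^\triangle\subseteq\fcal$ of \fcal with respect to $\|\cdot\|$, so that $|\fcal^\triangle|=\ncal(\epsilon/2,\fcal,\|\cdot\|)$, and then set $\tfcal^\triangle:=\{u-v:u,v\in\fcal^\triangle\}$. The cardinality bound is immediate: the map $(u,v)\mapsto u-v$ from $\fcal^\triangle\times\fcal^\triangle$ onto $\tfcal^\triangle$ is surjective, hence $|\tfcal^\triangle|\leq|\fcal^\triangle|^2=\ncal^2(\epsilon/2,\fcal,\|\cdot\|)$.

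Next I would check the two properties that qualify $\tfcal^\triangle$ as a \emph{proper} $\epsilon$-cover of \tfcal. Properness: since $\fcal^\triangle\subseteq\fcal$, each element $u-v\in\tfcal^\triangle$ lies in \tfcal by the definition~\eqref{minus-class}, so $\tfcal^\triangle\subseteq\tfcal$. Covering: an arbitrary $h\in\tfcal$ can be written as $h=f-g$ with $f,g\in\fcal$; picking $u,v\in\fcal^\triangle$ with $\|f-u\|\leq\epsilon/2$ and $\|g-v\|\leq\epsilon/2$, the triangle inequality gives
\[
\|h-(u-v)\|=\|(f-u)-(g-v)\|\leq\|f-u\|+\|g-v\|\leq\epsilon.
\]
Combining properness, the covering property, and the cardinality bound yields $\ncal(\epsilon,\tfcal,\|\cdot\|)\leq|\tfcal^\triangle|\leq\ncal^2(\epsilon/2,\fcal,\|\cdot\|)$, which is the claim.

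I do not expect any genuine obstacle here; the whole argument is a single application of the triangle inequality. The only point meriting a little attention is that the paper's convention (Definition~\ref{def:covering-number}) insists on proper covers, so I must verify that $\tfcal^\triangle$ is itself a subset of \tfcal — and this is exactly why I start from a \emph{proper} cover of \fcal, whose elements belong to \fcal. Had one used improper covers the same computation would go through verbatim, so the properness restriction introduces no loss and no extra work.
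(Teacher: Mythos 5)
Your proof is correct and follows essentially the same route as the paper's own argument: both take a minimal proper $\epsilon/2$-cover $\fcal^\triangle$ of \fcal, form the set of pairwise differences as a proper $\epsilon$-cover of \tfcal, and conclude via the triangle inequality and the cardinality bound $|\fcal^\triangle|^2$. Your explicit remark on why properness of the cover of \fcal guarantees properness of the induced cover of \tfcal is a point the paper leaves implicit, but the substance is identical.
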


Since our definition of covering numbers requires the $\epsilon$-cover to belong to the original class, covering numbers of a sub-class is not necessarily smaller than that of the whole class. However, we have the following structural result for tackling covering numbers of a sub-class.

\begin{lemma}[\citep{pollard1984convergence}]\label{lem:covering-number-relationship}
  Let $\fcal$ be a class of functions from \xcal to \rbb and let $\fcal_0\subseteq\fcal$ be a subset. Then for any $\epsilon>0$, we have the following relationship on covering numbers: $\ncal(\epsilon,\fcal_0,d)\leq\ncal(\epsilon/2,\fcal,d)$.
\end{lemma}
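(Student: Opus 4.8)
The plan is to isolate the one genuine subtlety in the statement, which is that the covering numbers in Definition~\ref{def:covering-number} are defined through \emph{proper} covers. Thus a minimal $\epsilon/2$-cover of $\fcal$ need not consist of elements of $\fcal_0$, and cannot simply be reused as a cover of $\fcal_0$. The remedy is the classical ``snap to a nearby admissible point'' device, which trades a factor of two in the radius for the properness requirement, and the triangle inequality for $d$ supplies exactly that factor.

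Concretely, I would first let $\fcal^{\triangle}=\{g_1,\ldots,g_m\}$ be a minimal proper $\epsilon/2$-cover of $\fcal$, so that $\fcal^{\triangle}\subseteq\fcal$ and $m=\ncal(\epsilon/2,\fcal,d)$. Next, for each index $i$ for which the ball $\{h:d(h,g_i)\leq\epsilon/2\}$ meets $\fcal_0$, I would fix a single point $f_i\in\fcal_0$ lying in that intersection, and discard all indices $i$ for which the intersection is empty; collecting the chosen points yields a set $\fcal_0^{\triangle}\subseteq\fcal_0$ of cardinality at most $m$. Then I would verify that $\fcal_0^{\triangle}$ is an $\epsilon$-cover of $\fcal_0$: given any $f\in\fcal_0\subseteq\fcal$, there is some $g_i$ with $d(f,g_i)\leq\epsilon/2$, and since the membership $f\in\fcal_0$ witnesses that the ball around $g_i$ meets $\fcal_0$, the representative $f_i$ exists and satisfies $d(g_i,f_i)\leq\epsilon/2$; the triangle inequality then gives $d(f,f_i)\leq\epsilon$. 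Hence $\fcal_0^{\triangle}$ is a proper $\epsilon$-cover of $\fcal_0$ of size at most $\ncal(\epsilon/2,\fcal,d)$, and the claimed bound follows since $\ncal(\epsilon,\fcal_0,d)$ is by definition the minimal cardinality of such a cover.

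There is no real difficulty here; the only step requiring care is the bookkeeping tied to properness, namely ensuring the representatives $f_i$ genuinely belong to $\fcal_0$ so that the constructed cover is proper, and simultaneously ensuring that every $f\in\fcal_0$ is actually covered. Restricting attention to those balls that intersect $\fcal_0$ (rather than using all of $\fcal^{\triangle}$) is precisely what reconciles these two demands, and it is the part I would write out explicitly.
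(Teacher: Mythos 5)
Your proof is correct and follows essentially the same route as the paper's own argument: both take a minimal proper $\epsilon/2$-cover of $\fcal$, keep only the balls meeting $\fcal_0$, replace each surviving center by a representative inside $\fcal_0$, and invoke the triangle inequality to obtain a proper $\epsilon$-cover of $\fcal_0$ of size at most $\ncal(\epsilon/2,\fcal,d)$. Nothing is missing.
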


The following structural result on Rademacher complexities provides us a powerful tool to tackle the complexity of a composite class via that of the basis class.
\begin{lemma}[Contraction property~\citep{bartlett2005local}]\label{lem:contraction inequality}
  Let $\phi$ be a Lipschitz function with constant $L$, that is, $|\phi(x)-\phi(y)|\leq L|x-y|$. Then for every function class \fcal there holds
  \begin{equation}\label{comprison-inequality}
    \ebb_\sigma R_n\phi\circ\fcal\leq L\ebb_\sigma R_n\fcal,
  \end{equation}
  where $\phi\circ\fcal:=\{\phi\circ f:f\in\fcal\}$ and $\circ$ is the composition operator.
\end{lemma}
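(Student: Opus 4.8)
The plan is to prove the un-normalized form
\[
\ebb_\sigma\sup_{f\in\fcal}\sum_{i=1}^n\sigma_i\phi(f(X_i))\leq L\,\ebb_\sigma\sup_{f\in\fcal}\sum_{i=1}^n\sigma_if(X_i),
\]
since the factor $1/n$ is common to both sides and the sample $X_1,\ldots,X_n$ is fixed under the conditional expectation $\ebb_\sigma$. The strategy is a \emph{coordinatewise replacement} argument: I would convert $\phi(f(X_i))$ into $Lf(X_i)$ one index at a time, so it suffices to establish a single-coordinate contraction inequality and then iterate it over $i=1,\ldots,n$. Concretely, defining the hybrid quantity $\Phi_m:=\ebb_\sigma\sup_f[\sum_{i\leq m}L\sigma_if(X_i)+\sum_{i>m}\sigma_i\phi(f(X_i))]$, one has $\Phi_0$ equal to the left-hand side and $\Phi_n$ equal to the right-hand side, so the goal becomes $\Phi_{m-1}\leq\Phi_m$ for each $m$.

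To pass from $\Phi_{m-1}$ to $\Phi_m$, I would isolate the Rademacher variable $\sigma_m$. Writing $a(f):=\sum_{i<m}L\sigma_if(X_i)+\sum_{i>m}\sigma_i\phi(f(X_i))$ for the contributions of all other coordinates and conditioning on $\{\sigma_i:i\neq m\}$, it suffices to show, for every realization of those signs,
\[
\ebb_{\sigma_m}\sup_{f\in\fcal}\bigl[a(f)+\sigma_m\phi(f(X_m))\bigr]\leq\ebb_{\sigma_m}\sup_{f\in\fcal}\bigl[a(f)+L\sigma_m f(X_m)\bigr],
\]
after which integrating over the remaining signs preserves the inequality and yields $\Phi_{m-1}\leq\Phi_m$. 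Since $\sigma_m$ is a single Rademacher sign, each side is the average of its $\sigma_m=+1$ and $\sigma_m=-1$ values; expanding and merging the two suprema into a supremum over two independent copies $f,f'\in\fcal$ turns the left-hand side into
\[
\tfrac12\sup_{f,f'\in\fcal}\bigl[a(f)+a(f')+\phi(f(X_m))-\phi(f'(X_m))\bigr],
\]
and the right-hand side into the same expression with $\phi(f(X_m))-\phi(f'(X_m))$ replaced by $L(f(X_m)-f'(X_m))$.

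The main obstacle, and the only place the hypothesis enters, is the resulting pairwise bound. For fixed $f,f'$ the Lipschitz property gives $\phi(f(X_m))-\phi(f'(X_m))\leq L\,|f(X_m)-f'(X_m)|$, and $L\,|f(X_m)-f'(X_m)|$ equals either $L(f(X_m)-f'(X_m))$ or $L(f'(X_m)-f(X_m))$. Because the target double supremum ranges over \emph{both} orderings of the pair---it is symmetric under the swap $f\leftrightarrow f'$, as both copies range over the same class $\fcal$---it dominates whichever sign occurs (using the pair $(f,f')$ in the first case and the swapped pair $(f',f)$ in the second). This symmetry, introduced precisely by the second Rademacher copy, is the crux of the argument. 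Taking the supremum over $f,f'$ on the left and iterating the single-coordinate inequality across all $n$ indices then gives the claim. I note that no normalization such as $\phi(0)=0$ is required, since the bound passes through the absolute value $|f(X_m)-f'(X_m)|$, which is insensitive to additive constants in $\phi$; equivalently, $\ebb_\sigma R_n\fcal$ is unchanged by adding a constant to every function because $\ebb_\sigma\sum_i\sigma_i=0$.
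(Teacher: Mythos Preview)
Your argument is correct and is precisely the standard Ledoux--Talagrand contraction proof: the hybrid/coordinatewise replacement, the reduction to a single Rademacher sign, and the symmetry-under-swap trick are all in order, and your observation that no $\phi(0)=0$ hypothesis is needed for the one-sided (non-absolute-value) version is accurate. There is nothing to compare against, however: the paper does not supply its own proof of this lemma but merely quotes it from \citep{bartlett2005local} as a tool.
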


\begin{lemma}[Refined entropy integral~\citep{mendelson2002improving}]\label{lem:refined-dudley}
  Let $X_1,\ldots,X_n$ be a sequence of examples and let $P_n$ be the associated empirical measure. For any function class \fcal and any monotone sequence $(\epsilon_k)_{k=0}^\infty$ decreasing to $0$ such that $\epsilon_0\geq\sup_{f\in\fcal}\sqrt{P_nf^2}$, the following inequality holds for every non-negative integer $N$:
  \begin{equation}\label{refined-dudley-integral}
    \ebb_\sigma R_n\fcal\leq 4\sum_{k=1}^N\epsilon_{k-1}\sqrt{\frac{\log\ncal(\epsilon_k,\fcal,\|\cdot\|_{L_2(P_n)})}{n}}+\epsilon_N.
  \end{equation}
\end{lemma}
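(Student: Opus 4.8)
The plan is to prove the bound by a chaining (successive-approximation) argument, carried out conditionally on the sample $X_1,\dots,X_n$, since the only randomness in $\ebb_\sigma R_n\fcal$ is the Rademacher sequence. First I would fix, for each $k\ge1$, a minimal proper $\epsilon_k$-cover $\fcal_k$ of $\fcal$ with respect to $\|\cdot\|_{L_2(P_n)}$, so that $|\fcal_k|=\ncal(\epsilon_k,\fcal,\|\cdot\|_{L_2(P_n)})$, and for each $f\in\fcal$ let $\pi_k(f)\in\fcal_k$ be a nearest point, $\|f-\pi_k(f)\|_{L_2(P_n)}\le\epsilon_k$. I would also set $\pi_0(f):=0$ for every $f$; this is a legitimate ``$\epsilon_0$-approximation'' precisely because the hypothesis $\epsilon_0\ge\sup_{f\in\fcal}\sqrt{P_nf^2}$ forces $\|f-0\|_{L_2(P_n)}\le\epsilon_0$. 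The telescoping identity $f=(f-\pi_N(f))+\sum_{k=1}^N\big(\pi_k(f)-\pi_{k-1}(f)\big)$ then yields $R_nf=\frac1n\sum_i\sigma_i\big(f-\pi_N(f)\big)(X_i)+\sum_{k=1}^N\frac1n\sum_i\sigma_i\big(\pi_k(f)-\pi_{k-1}(f)\big)(X_i)$.

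Second, I would bound the two kinds of terms. The tail term is controlled deterministically: by Cauchy--Schwarz and $\sigma_i^2=1$, one has $\frac1n\sum_i\sigma_i\big(f-\pi_N(f)\big)(X_i)\le\|f-\pi_N(f)\|_{L_2(P_n)}\le\epsilon_N$ for every $f$ and every sign pattern, so the supremum over $f$ of this term, and hence its conditional expectation, is at most $\epsilon_N$. For the $k$-th link, the function $\pi_k(f)-\pi_{k-1}(f)$ ranges over a finite set of cardinality at most $|\fcal_k|\,|\fcal_{k-1}|\le\ncal(\epsilon_k,\fcal,\|\cdot\|_{L_2(P_n)})^2$ (here I use that a coarser cover is no larger, so $|\fcal_{k-1}|\le|\fcal_k|$), and by the triangle inequality every difference that actually occurs satisfies $\|\pi_k(f)-\pi_{k-1}(f)\|_{L_2(P_n)}\le\epsilon_k+\epsilon_{k-1}\le2\epsilon_{k-1}$. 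Applying the Massart lemma (Lemma~\ref{lem:massart}) to this finite sub-class with $r=(2\epsilon_{k-1})^2$ and cardinality bounded by $\ncal(\epsilon_k,\fcal,\|\cdot\|_{L_2(P_n)})^2$, and using $\log(\ncal^2)=2\log\ncal$ to collapse constants, gives $\ebb_\sigma\sup_f\frac1n\sum_i\sigma_i\big(\pi_k(f)-\pi_{k-1}(f)\big)(X_i)\le4\epsilon_{k-1}\sqrt{\frac{\log\ncal(\epsilon_k,\fcal,\|\cdot\|_{L_2(P_n)})}{n}}$; the $k=1$ link is even cheaper since $\pi_0\equiv0$ keeps the differences inside $\fcal_1$, producing the constant $2\sqrt2<4$. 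Summing over $k=1,\dots,N$, using subadditivity of the supremum, and taking $\ebb_\sigma$, I would arrive at $\ebb_\sigma R_n\fcal\le4\sum_{k=1}^N\epsilon_{k-1}\sqrt{\frac{\log\ncal(\epsilon_k,\fcal,\|\cdot\|_{L_2(P_n)})}{n}}+\epsilon_N$, which is the claim.

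I do not expect a genuine conceptual obstacle, as this is a classical chaining estimate; the delicate part is the bookkeeping. One must track that the link between resolution levels $k-1$ and $k$ is governed by the \emph{finer} cover $\fcal_k$, which is what produces the slightly mismatched sum $\sum_k\epsilon_{k-1}\sqrt{\log\ncal(\epsilon_k,\fcal,\|\cdot\|_{L_2(P_n)})/n}$; one must square the cardinality when both endpoints of a link vary and then absorb the resulting factor-of-$2$ in the logarithm together with the factor $(2\epsilon_{k-1})^2$ to obtain the clean constant $4$ under the square root; and one must justify the base of the chain via $\pi_0\equiv0$, which is exactly where the hypothesis $\epsilon_0\ge\sup_f\sqrt{P_nf^2}$ enters (and which also handles the degenerate case $N=0$, where the bound reduces to $\ebb_\sigma R_n\fcal\le\epsilon_0$ and follows from Cauchy--Schwarz alone). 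A minor point to double-check is that, since everything is done conditionally on $X_1,\dots,X_n$, no additional averaging over the sample is needed for this lemma.
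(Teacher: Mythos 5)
Your proposal is correct and follows essentially the same route as the paper's own argument: a chaining decomposition through minimal proper $\epsilon_k$-covers with the base level taken to be $\{0\}$ (justified by $\epsilon_0\geq\sup_{f\in\fcal}\sqrt{P_nf^2}$), Cauchy--Schwarz for the tail term, and the Massart lemma applied to each finite link class of cardinality at most $\ncal(\epsilon_k,\fcal,\|\cdot\|_{L_2(P_n)})^2$ and empirical radius $2\epsilon_{k-1}$. The constant bookkeeping ($\sqrt{2\cdot(2\epsilon_{k-1})^2\cdot 2\log\ncal/n}=4\epsilon_{k-1}\sqrt{\log\ncal/n}$) checks out exactly.
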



\bibliographystyle{abbrvnat}
\bibliography{LRC}

\end{document}